\newtheorem{assumption}{Assumption}
\begin{document}
\title{Improved Regret Bounds for Online Kernel Selection under Bandit Feedback
\thanks{This work was accepted by ECML-PKDD 2022.}}
\titlerunning{Improved Regret Bounds for Online Kernel Selection under Bandit Feedback}
%
%\titlerunning{Abbreviated paper title}
% If the paper title is too long for the running head, you can set
% an abbreviated paper title here
%
\author{Junfan Li\orcidID{0000-0003-1027-4251} \and Shizhong Liao\orcidID{0000-0003-0594-7116} (\Letter)}
\authorrunning{J.Li and S.Liao}
% First names are abbreviated in the running head.
% If there are more than two authors, 'et al.' is used.
%
\institute{College of Intelligence and Computing, Tianjin University, \\Tianjin 300350, China\\
\email{\{junfli,szliao\}@tju.edu.cn}}
\toctitle{Improved Regret Bounds for Online Kernel Selection under Bandit Feedback}
\tocauthor{Junfan~Li and Shizhong~Liao}
\maketitle              % typeset the header of the contribution
\begin{abstract}
  In this paper,
  we improve the regret bound for online kernel selection under bandit feedback.
  Previous algorithm enjoys a $O((\Vert f\Vert^2_{\mathcal{H}_i}+1)K^{\frac{1}{3}}T^{\frac{2}{3}})$
  expected bound for Lipschitz loss functions.
  We prove two types of regret bounds improving the previous bound.
  For smooth loss functions,
  we propose an algorithm with a
  $O(U^{\frac{2}{3}}K^{-\frac{1}{3}}(\sum^K_{i=1}L_T(f^\ast_i))^{\frac{2}{3}})$ expected bound
  where $L_T(f^\ast_i)$ is
  the cumulative losses of optimal hypothesis in $\mathbb{H}_{i}
  =\{f\in\mathcal{H}_i:\Vert f\Vert_{\mathcal{H}_i}\leq U\}$.
  The data-dependent bound keeps the previous worst-case bound
  and is smaller if most of candidate kernels match well with the data.
  For Lipschitz loss functions,
  we propose an algorithm with a $O(U\sqrt{KT}\ln^{\frac{2}{3}}{T})$ expected bound
  asymptotically improving the previous bound.
  We apply the two algorithms to online kernel selection with time constraint
  and prove new regret bounds matching or improving
  the previous $O(\sqrt{T\ln{K}}
  +\Vert f\Vert^2_{\mathcal{H}_i}\max\{\sqrt{T},\frac{T}{\sqrt{\mathcal{R}}}\})$ expected bound
  where $\mathcal{R}$ is the time budget.
  Finally, we empirically verify our algorithms on online regression and classification tasks.
  \keywords{Model selection  \and Online learning \and Bandit \and Kernel method.}
\end{abstract}
\section{Introduction}

    Selecting a suitable kernel function is critical for online kernel learning algorithms,
    and is more challenge than offline kernel selection
    since the data are provided sequentially and may not be i.i.d..
    Such kernel selection problems are named online kernel selection \cite{Zhang2018Online}.
    To address those challenges,
    many online kernel selection algorithms
    reduce it to a sequential decision problem,
    and then randomly select a kernel function
    or use a convex combination of multiple kernel functions on the fly
    \cite{Yang2012Online,Hoi2013Online,Zhang2018Online,Shen2019Random,Liao2021High}.
    Let $\mathcal{K}=\{\kappa_i\}^K_{i=1}$ be predefined base kernels.
    An adversary sequentially sends the learner instances $\{\mathbf{x}_t\}^T_{t=1}$.
    The learner will choose a sequence of hypotheses $\{f_{t}\}^T_{t=1}$
    from the $K$ reproducing kernel Hilbert spaces (RKHSs) induced by kernels in $\mathcal{K}$.
    At each round $t$,
    the learner suffers a prediction loss $\ell(f_t(\mathbf{x}_t),y_t)$.
    The goal is to minimize the regret defined as follows,
    \begin{equation}
    \label{eq:ECML2022:definition_regret}
        \forall \kappa_i\in\mathcal{K},\forall f\in\mathcal{H}_i,\quad
        \mathrm{Reg}_T(f)=\sum^T_{t=1}\ell(f_t(\mathbf{x}_t),y_t)
        -\sum^T_{t=1}\ell(f(\mathbf{x}_t),y_t).
    \end{equation}
    Effective online kernel selection algorithms must keep
    sublinear regret bounds w.r.t. the unknown optimal RKHS $\mathcal{H}_{i^\ast}$
    induced by $\kappa_{i^\ast}\in\mathcal{K}$.

    Previous work reduces online kernel selection to a sequential decision problem,
    including
    (i) prediction with expert advice \cite{Cesa-Bianchi2006Prediction},
    (ii) $K$-armed bandit problem \cite{Auer2002The},
    (iii) prediction with limited advice \cite{Seldin2014Prediction}.
    The online multi-kernel learning algorithms \cite{Sahoo2014Online,Foster2017Parameter}
    which reduce the problem to prediction with expert advice,
    use a convex combination of $K$ hypotheses
    and enjoy a $O(\mathrm{poly}(\Vert f\Vert_{\mathcal{H}_i})\sqrt{T\ln{K}})$ regret bound.
    Combining $K$ hypotheses induces a $O(Kt)$ per-round time complexity
    which is linear with $K$.
    To reduce the time complexity,
    the OKS algorithm (Online Kernel Selection) \cite{Yang2012Online}
    reduces the problem to an adversarial $K$-armed bandit problem.
    OKS randomly selects a hypothesis per-round and only provides a
    $O(\mathrm{poly}(\Vert f\Vert_{\mathcal{H}_i})K^{\frac{1}{3}}T^{\frac{2}{3}}))$
    \footnote{$\mathrm{poly}(\Vert f\Vert_{\mathcal{H}_i})=\Vert f\Vert^2_{\mathcal{H}_i}+1$.
    The original paper shows a $O((\Vert f\Vert^2_{\mathcal{H}_i}+1)\sqrt{KT})$
    expected regret bound.
    We will clarify the difference in Section \ref{sec:ECML2022:problem_setting}.}
    expected bound.
    The per-round time complexity of OKS is $O(t)$.
    The $\mathrm{B(AO)_2KS}$ algorithm \cite{Liao2021High}
    reduces the problem to predict with limited advice
    and randomly selects two hypotheses per-round.
    $\mathrm{B(AO)_2KS}$
    can provide a $\tilde{O}(\mathrm{poly}(\Vert f\Vert_{\mathcal{H}_i})\sqrt{KT})$ high-probability bound
    and suffers a $O(t/K)$ per-round time complexity.
    From the perspective of algorithm design,
    an important question arises:
    does there exist some algorithm only selecting a hypothesis
    (or under bandit feedback)
    improving the
    $O(\mathrm{poly}(\Vert f\Vert_{\mathcal{H}_i})K^{\frac{1}{3}}T^{\frac{2}{3}}))$
    expected bound?
    The significances of answering the question include
    (i) explaining the information-theoretic cost induced by only selecting a hypothesis
    (or observing a loss);
    (ii) designing better algorithms for online kernel selection with time constraint.
    In this paper,
    we will answer the question affirmatively.

    We consider Lipschitz loss functions and
    smooth loss functions (Assumption \ref{ass:AAAI2022:smoothness}).
    For Lipschitz loss functions,
    we propose an algorithm
    whose expected regret bound is $O(U\sqrt{KT}\ln^{\frac{2}{3}}{T})$ asymptotically improving the
    $O(\mathrm{poly}(\Vert f\Vert_{\mathcal{H}_i})K^{\frac{1}{3}}T^{\frac{2}{3}}))$
    expected bound.
    Our regret bound proves that selecting a or multiple hypotheses
    will not induce significant variation on the worst-case regret bound.
    For smooth loss functions,
    we propose an adaptive parameter tuning scheme for OKS and prove a
    $O(U^{\frac{2}{3}}K^{-\frac{1}{3}}(\sum^K_{j=1}L_T(f^\ast_j))^{\frac{2}{3}})$
    expected bound
    where
    $L_T(f^\ast_j)=\min_{f\in\mathbb{H}_j}\sum_{t\in[T]}\ell(f(\mathbf{x}_t),y_t)$.
    If most of base kernels in $\mathcal{K}$ match well with the data,
    i.e., $L_T(f^\ast_j)\ll T$,
    then the data-dependent regret bound significantly
    improves the previous worst-case bound.
    In the worst case,
    i.e., $L_T(f^\ast_j)=O(T)$,
    the data-dependent bound is still same with the previous bound.
    Our new regret bounds answer the above question.
    We summary the results in Table \ref{tab:ECML2022:summary_main_result}.

    \begin{table}[!t]
      \centering
      \label{tab:ECML2022:summary_main_result}
      \caption{Expected regret bounds for online kernel selection under bandit feedback.
      $\mathcal{R}$ is the time budget.
      $\bar{L}_{T}=\sum^K_{j=1}L_T(f^\ast_j)$.
      $\nu$ is a parameter in the definition of smooth loss
      (see Assumption \ref{ass:AAAI2022:smoothness}).
      There is no algorithm under bandit feedback in the case of a time budget.
      Thus we report the result produced under the expert advice model \cite{Li2022Worst}.}
      \begin{tabular}{l|l|r|r}
      \toprule
        $\mathcal{R}$&Loss function&Previous results&Our results\\
      \toprule
      \multirow{3}{*}{No}&
        Lipschitz loss  &
        \multirow{3}{*}{$O\left(\mathrm{poly}(\Vert f\Vert_{\mathcal{H}_i})K^{\frac{1}{3}}T^{\frac{2}{3}}\right)$
        \cite{Yang2012Online}}
        & {\color{blue}$O(U\sqrt{KT}\ln^{\frac{2}{3}}{T})$}  \\
        &Smooth loss $\nu=1$  &   & {\color{blue}$O(U^{\frac{2}{3}}K^{-\frac{1}{3}}\bar{L}^{\frac{2}{3}}_{T})$}  \\
        &Smooth loss $\nu=2$   &   & {\color{blue}$O(U^{\frac{2}{3}}K^{-\frac{1}{3}}\bar{L}^{\frac{2}{3}}_{T})$}  \\
      \toprule
      \multirow{3}{*}{Yes}&
        Lipschitz loss    &
        \multirow{3}{*}{$O\left(\Vert f\Vert^2_{\mathcal{H}_i}
        \max\{\sqrt{T},\frac{T}{\sqrt{\mathcal{R}}}\}\right)$
        \cite{Li2022Worst}}
        & {\color{blue}$O(U\sqrt{KT}\ln^{\frac{2}{3}}{T}+\frac{UT\sqrt{\ln{T}}}{\sqrt{\mathcal{R}}})$}  \\
        &Smooth loss $\nu=1$   &    & {\color{blue}$\tilde{O}(U^{\frac{2}{3}}K^{-\frac{1}{3}}\bar{L}^{\frac{2}{3}}_{T}
        +\frac{UL_T(f^\ast_i)}{\sqrt{\mathcal{R}}})$}  \\
        &Smooth loss $\nu=2$   &    & {\color{blue}
        $\tilde{O}(U^{\frac{2}{3}}K^{-\frac{1}{3}}\bar{L}^{\frac{2}{3}}_{T}
        +\frac{U\sqrt{TL_T(f^\ast_i)}}{\sqrt{\mathcal{R}}})$}  \\
      \bottomrule
      \end{tabular}
    \end{table}

    We apply the two algorithms to online kernel selection with time constraint
    where the time of kernel selection and online prediction
    is limited to $\mathcal{R}$ quanta \cite{Li2022Worst}.
    It was proved that any budgeted algorithm must suffer
    an expected regret of order $\Omega(\Vert f^\ast_i\Vert_{\mathcal{H}_i}
    \max\{\sqrt{T},\frac{T}{\sqrt{\mathcal{R}}}\})$
    and the LKMBooks algorithm enjoys a
    $O(\sqrt{T\ln{K}}+\Vert f\Vert^{2}_{\mathcal{H}_i}
    \max\{\sqrt{T},\frac{T}{\sqrt{\mathcal{R}}}\})$
    expected bound \cite{Li2022Worst}.
    LKMBooks uses convex combination to aggregate $K$ hypotheses.
    Raker uses random features to approximate kernel functions
    and also aggregates $K$ hypotheses \cite{Shen2019Random}.
    Raker enjoys a $\tilde{O}((\sqrt{\ln{K}}+\Vert f\Vert^{2}_1)\sqrt{T}
        +\Vert f\Vert_1\frac{T}{\sqrt{\mathcal{R}}})$ bound
    where $f=\sum^T_{t=1}\alpha_t\kappa_i(\mathbf{x}_t,\cdot)$
    and $\Vert f\Vert_1=\Vert\bm{\alpha}\Vert_1$ \cite{Shen2019Random}.
    The two algorithms reduce the problem to prediction with expert advice,
    while our algorithms just use bandit feedback.

    We also use random features and make a mild assumption that
    reduces the time budget $\mathcal{R}$ to the number of features.
    For smooth loss functions,
    we prove two data-dependent regret bounds
    which can improve the previous worst-case bounds \cite{Shen2019Random,Li2022Worst}
    if there is a good kernel in $\mathcal{K}$ that matches well with the data.
    For Lipschitz loss functions,
    our algorithm enjoys a similar upper bound with LKMBooks.
    We also summary the results in Table \ref{tab:ECML2022:summary_main_result}.

\section{Problem Setting}
\label{sec:ECML2022:problem_setting}

    Denote by $\{({\bf x}_t,y_t)\}_{t\in[T]}$ a sequence of examples,
    where ${\bf x}_t \in\mathcal{X}\subseteq\mathbb{R}^d, y\in [-1,1]$
    and $[T] = \{1,2,\ldots,T\}$.
    Let $\kappa(\cdot,\cdot):\mathcal{X} \times \mathcal{X} \rightarrow \mathbb{R}$
    be a positive definite kernel and
    $\mathcal{K} = \{\kappa_{1},\ldots,\kappa_{K}\}$.
    For each $\kappa_i\in \mathcal{K}$,
    let $\mathcal{H}_i= \{f \vert f : \mathcal{X} \rightarrow \mathbb{R}\}$
    be the associated RKHS satisfying
    $\langle f,\kappa_i(\mathbf{x},\cdot)\rangle_{\mathcal{H}_i}=f(\mathbf{x})$,
    $\forall f\in\mathcal{H}_i$.
    Let $\Vert f\Vert^2_{\mathcal{H}_i}=\langle f,f\rangle_{\mathcal{H}_i}$.
    We assume that $\kappa_i(\mathbf{x},\mathbf{x})\leq 1$, $\forall \kappa_i\in\mathcal{K}$.
    Let $\ell(\cdot,\cdot):\mathbb{R}\times \mathbb{R}\rightarrow \mathbb{R}$
    be the loss function.

\subsection{Online Kernel Selection under Bandit Feedback}

    We formulate online kernel selection as a sequential decision problem.
    At any round $t\in[T]$,
    an adversary gives an instance ${\bf x}_t$.
    The learner maintains $K$ hypotheses $\{f_{t,i}\in\mathcal{H}_i\}^K_{i=1}$
    and selects $f_{t}\in \overline{\mathrm{span}(f_{t,i}:i\in[K])}$,
    and outputs $f_{t}(\mathbf{x}_t)$.
    Then the adversary gives $y_t$.
    The learner suffers a prediction loss $\ell(f_t(\mathbf{x}_t),y_t)$.
    The learner aims to minimize the regret w.r.t.
    any $f\in\cup^K_{i=1}\mathcal{H}_i$
    which is defined in \eqref{eq:ECML2022:definition_regret}.
    If the learner only computes a loss $\ell(f_{t,I_t}(\mathbf{x}_t),y_t),I_t\in[K]$,
    then we call it \textit{bandit feedback setting}.
    The learner can also compute $N\in\{2,\ldots,K\}$ losses,
    i.e., $\{\ell(f_{t,{i_j}}(\mathbf{x}_t),y_t)\}^N_{j=1}$, $i_j\in[K]$.
    The OKS algorithm \cite{Yang2012Online} follows the bandit feedback setting.
    The online multi-kernel learning algorithms \cite{Sahoo2014Online,Foster2017Parameter,Shen2019Random}
    correspond to $N=K$.
    The $\mathrm{B(AO)_2KS}$ algorithm \cite{Liao2021High} corresponds to $N=2$.
    From the perspective of computation,
    the per-round time complexity of computing $N$ losses
    is $N$ times larger than the bandit feedback setting.
    From the perspective of regret bound,
    we aim to reveal the information-theoretic cost
    induced by observing a loss (or bandit feedback) not multiple losses (or $N\geq 2$).

\subsection{Regret bound of OKS}
\label{sec:ECML2022:analysis_of_OKS}

    We first prove that the regret bound of OKS \cite{Yang2012Online}
    is $O((\Vert f\Vert^2_{\mathcal{H}_i}+1)K^{\frac{1}{3}}T^{\frac{2}{3}})$,
    and then explain the technical weakness of OKS.

    \begin{algorithm}[!t]
        \caption{OKS}
        \label{alg:IJCAI2022:OKS}
        \begin{algorithmic}[1]
        \Require {$\mathcal{K}= \{\kappa_{1},\ldots,\kappa_{K}\}$, $\delta\in(0,1)$, $\eta$, $\lambda$}
        \Ensure  $\{f_{1,i}=0,w_{1,i}=1\}^K_{i=1}$, $\mathbf{p}_1=\frac{1}{K}\mathbf{1}_K$\;
        \For{t=1,\ldots,T}
            \State Receive ${\bf x}_t$;
            \State Sample a kernel $\kappa_{I_t}$ where $I_t\sim \mathbf{p}_t$;
            \State Update $w_{t+1,I_t}=w_{t,I_t}\exp(-\eta\frac{\ell(f_{t,I_t}(\mathbf{x}_t),y_t)}{p_{t,I_t}})$;
            \State Update $f_{t+1,I_t}=f_{t,I_t}-\lambda
            \frac{\nabla_{f_{t,I_t}}\ell(f_{t,I_t}(\mathbf{x}_t),y_t)}{p_{t,I_t}}$;
            \State Update $\mathbf{q}_{t+1}=\frac{\mathbf{w}_{t+1}}{\sum^K_{j=1}w_{t+1,j}}$
            and set $\mathbf{p}_{t+1}=(1-\delta)\mathbf{q}_{t+1}+\frac{\delta}{K}\mathbf{1}_K$;
        \EndFor
        \State Output: $\mathbf{q}_T$.
        \end{algorithmic}
    \end{algorithm}

    The pseudo-code of OKS is shown Algorithm \ref{alg:IJCAI2022:OKS}.
    Let $\Delta_{K}$ be the $(K-1)$-dimensional simplex.
    At any round $t$,
    OKS maintains $\mathbf{p}_t,\mathbf{q}_t\in\Delta_{K}$.
    OKS samples $f_{t,I_t}$ where $I_t\sim \mathbf{p}_t$,
    and outputs $f_{t,I_t}(\mathbf{x}_t)$.
    For simplicity, we define two notations,
    \begin{align*}
        L_T(f):=\sum^T_{t=1}\ell(f(\mathbf{x}_t),y_t),\quad
        \bar{L}_{\mathbf{q}_{1:T}}:=&\sum^T_{t=1}\sum^K_{i=1}q_{t,i}\ell(f_{t,i}(\mathbf{x}_t),y_t).
    \end{align*}

    \begin{theorem}[\cite{Yang2012Online}]
    \label{thm:IJCAI2022:OKS}
        Assuming that $\ell(f_{t,i}(\mathbf{x}),y)\in [0,\ell_{\max}]$,
        $\forall i\in[K]$, $t\in[T]$,
        and $\Vert\nabla_f\ell(f(\mathbf{x}),y)\Vert_{\mathcal{H}_i}\leq G$, $\forall f\in\mathcal{H}_i$.
        The expected regret of OKS satisfies
        $$
            \forall i\in[K],f\in\mathcal{H}_i,\quad \mathbb{E}\left[\bar{L}_{\mathbf{q}_{1:T}}\right]
            \leq L_T(f)
            +\frac{\Vert f\Vert^2_{\mathcal{H}_i}}{2\lambda}
            +\frac{\lambda KTG^2}{2\delta}+\frac{\eta KT\ell_{\max}^2}{2(1-\delta)}+\frac{\ln{K}}{\eta}.
        $$
        In particular, let $\delta\in(0,1)$ be a constant and $\eta,\lambda=\Theta((KT)^{-\frac{1}{2}})$,
        then the expected regret bound is $O((\Vert f\Vert^2_{\mathcal{H}_i}+1)\sqrt{KT})$.
    \end{theorem}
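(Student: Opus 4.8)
The plan is to split the expected regret into two classical pieces joined at the comparator index $i$: an exponential-weights (Hedge/EXP3-style) bound for the sampling distribution $\mathbf{q}_t$, and an online gradient descent (OGD) bound in $\mathcal{H}_i$ for the evolving hypothesis $f_{t,i}$. Concretely, I would first bound $\mathbb{E}[\bar{L}_{\mathbf{q}_{1:T}}]$ in terms of $\mathbb{E}[\sum_{t}\ell(f_{t,i}(\mathbf{x}_t),y_t)]$, then bound the latter in terms of $L_T(f)$, and finally add the two chains. Throughout I write $\ell_{t,j}=\ell(f_{t,j}(\mathbf{x}_t),y_t)$ for brevity and work with the filtration $\mathcal{F}_{t-1}$ generated by the draws $I_1,\dots,I_{t-1}$, so that $\mathbf{p}_t,\mathbf{q}_t$ and every $f_{t,j}$ are $\mathcal{F}_{t-1}$-measurable.

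For the selection part I introduce the importance-weighted estimator $\tilde{\ell}_{t,j}=\frac{\ell_{t,j}}{p_{t,j}}\mathbbm{1}[I_t=j]$, under which the update reads $w_{t+1,j}=w_{t,j}\exp(-\eta\tilde{\ell}_{t,j})$. Tracking the log-partition function $\ln\sum_j w_{t,j}$, applying $e^{-x}\le 1-x+\tfrac{x^2}{2}$ and $\ln(1+x)\le x$ to the one-step ratio, telescoping, and lower-bounding the terminal potential by its $i$-th coordinate (with $\sum_j w_{1,j}=K$) gives the pathwise inequality $\sum_t\sum_j q_{t,j}\tilde{\ell}_{t,j}\le\sum_t\tilde{\ell}_{t,i}+\frac{\ln K}{\eta}+\frac{\eta}{2}\sum_t\sum_j q_{t,j}\tilde{\ell}_{t,j}^2$. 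Taking expectations and using $\mathbb{E}[\tilde{\ell}_{t,j}\mid\mathcal{F}_{t-1}]=\ell_{t,j}$ turns the left side into $\mathbb{E}[\bar{L}_{\mathbf{q}_{1:T}}]$ and the first right-hand term into $\mathbb{E}[\sum_t\ell_{t,i}]$; for the quadratic term I use $\mathbb{E}[\tilde{\ell}_{t,j}^2\mid\mathcal{F}_{t-1}]=\ell_{t,j}^2/p_{t,j}$ together with $q_{t,j}\le p_{t,j}/(1-\delta)$ and $\ell_{t,j}\le\ell_{\max}$ to obtain the term $\frac{\eta KT\ell_{\max}^2}{2(1-\delta)}$.

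For the hypothesis part I note that $f_{t,i}$ evolves as $f_{t+1,i}=f_{t,i}-\lambda\tilde{g}_{t,i}$ with $\tilde{g}_{t,i}=\frac{g_{t,i}}{p_{t,i}}\mathbbm{1}[I_t=i]$ and $g_{t,i}=\nabla_{f_{t,i}}\ell(f_{t,i}(\mathbf{x}_t),y_t)$. Expanding $\|f_{t+1,i}-f\|^2_{\mathcal{H}_i}$, telescoping with $f_{1,i}=0$, and taking expectations then yields the OGD bound: unbiasedness gives $\mathbb{E}[\langle\tilde{g}_{t,i},f_{t,i}-f\rangle\mid\mathcal{F}_{t-1}]=\langle g_{t,i},f_{t,i}-f\rangle$, convexity of $\ell$ with the reproducing property lower-bounds this by $\ell_{t,i}-\ell(f(\mathbf{x}_t),y_t)$, and $\mathbb{E}[\|\tilde{g}_{t,i}\|^2\mid\mathcal{F}_{t-1}]=\|g_{t,i}\|^2/p_{t,i}\le G^2K/\delta$ (using $p_{t,i}\ge\delta/K$). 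This produces $\mathbb{E}[\sum_t\ell_{t,i}]\le L_T(f)+\frac{\|f\|^2_{\mathcal{H}_i}}{2\lambda}+\frac{\lambda KTG^2}{2\delta}$. Adding this to the bound of the previous paragraph gives the claimed inequality, and the stated rate follows by setting $\delta=\Theta(1)$, $\eta,\lambda=\Theta((KT)^{-1/2})$ and balancing.

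The main obstacle is the probabilistic bookkeeping rather than any individual estimate. Both the Hedge and the OGD inequalities must be derived pathwise---holding for every realization of $I_1,\dots,I_T$---before expectations are taken, after which the tower property over $\mathcal{F}_{t-1}$ makes the importance weights cancel exactly and treats the random losses and gradients (random because $f_{t,i}$ depends on past choices) as fixed given the history. The two exploration facts $p_{t,j}\ge\delta/K$ and $q_{t,j}\le p_{t,j}/(1-\delta)$, both coming from the mixing step $\mathbf{p}_t=(1-\delta)\mathbf{q}_t+\frac{\delta}{K}\mathbf{1}_K$, are exactly what keep the two importance-weighted variances finite, and care is needed to apply each to the correct quantity ($q_{t,j}\le p_{t,j}/(1-\delta)$ in the Hedge second-order term, $p_{t,i}\ge\delta/K$ in the OGD gradient-norm term).
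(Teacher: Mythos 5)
Your proof is correct and follows essentially the same route as the argument underlying this theorem, which the paper does not reprove but cites from Yang et al.\ (2012): an importance-weighted exponential-weights (Hedge) analysis producing the terms $\frac{\ln K}{\eta}+\frac{\eta KT\ell_{\max}^2}{2(1-\delta)}$ via $q_{t,j}\leq p_{t,j}/(1-\delta)$, combined with an importance-weighted online gradient descent analysis producing $\frac{\Vert f\Vert^2_{\mathcal{H}_i}}{2\lambda}+\frac{\lambda KTG^2}{2\delta}$ via $p_{t,i}\geq \delta/K$. This matches term-for-term both the stated bound and the variance bookkeeping the paper itself sketches in Section~\ref{sec:ECML2022:analysis_of_OKS}, so there is nothing to correct.
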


    \begin{remark}
    \label{remark:ECML2022:parameter_setting_OKS}
        Since $I_t\sim \mathbf{p}_t$,
        the expected cumulative losses of OKS should be
        $\mathbb{E}\left[\bar{L}_{\mathbf{p}_{1:T}}\right]$
        which is different from $\mathbb{E}\left[\bar{L}_{\mathbf{q}_{1:T}}\right]$
        as stated in Theorem \ref{thm:IJCAI2022:OKS}.
        Since $\mathbf{p}_t=(1-\delta)\mathbf{q}_t+\frac{\delta}{K}\mathbf{1}_K$,
        the expected regret of OKS should be redefined as follows
        \begin{align*}
            \forall i\in[K],f\in\mathcal{H}_i,
            ~&\mathbb{E}\left[\bar{L}_{\mathbf{p}_{1:T}}\right]-L_T(f)\\
            \leq&\delta\mathbb{E}\left[\bar{L}_{\frac{1}{K}\mathbf{1}}\right]-\delta L_T(f)
            +\frac{\Vert f\Vert^2_{\mathcal{H}_i}}{2\lambda}
            +\frac{\lambda KTG^2}{2\delta}+\frac{\eta KT\ell^2_{\max}}{2(1-\delta)}+\frac{\ln{K}}{\eta}\\
            \leq&\delta T\ell_{\max}+\frac{\Vert f\Vert^2_{\mathcal{H}_i}}{2\lambda}
            +\frac{\lambda KTG^2}{2\delta}+\frac{\eta KT\ell^2_{\max}}{2(1-\delta)}+\frac{\ln{K}}{\eta}.
        \end{align*}
        To minimize the upper bound,
        let $\delta=(G/\ell_{\max})^{\frac{2}{3}}K^{\frac{1}{3}}T^{-\frac{1}{3}}$,
        $\lambda=\sqrt{\delta/(KTG^2)}$
        and $\eta=\sqrt{2(1-\delta)\ln{K}}/\sqrt{KT\ell^2_{\max}}$.
        The upper bound is
        $O((\Vert f\Vert^2_{\mathcal{H}_i}+1)K^{\frac{1}{3}}T^{\frac{2}{3}})$.
    \end{remark}

    \begin{remark}
        OKS is essentially an offline kernel selection algorithm,
        since it aims to output a hypothesis following $\mathbf{q}_T$ for test datasets
        (see line 8 in Algorithm \ref{alg:IJCAI2022:OKS}).
        Thus Theorem \ref{thm:IJCAI2022:OKS} defines the expected regret using $\{\mathbf{q}_1,\ldots,\mathbf{q}_T\}$,
        and the $O((\Vert f\Vert^2_{\mathcal{H}_i}+1)\sqrt{KT})$ bound is reasonable.
        For online kernel selection,
        we focus on the online prediction performance.
        Since OKS selects $f_{t,I_t}$ following $\mathbf{p}_t$,
        the expected regret should be defined using $\{\mathbf{p}_1,\ldots,\mathbf{p}_T\}$.
    \end{remark}

    We find that the dependence on
    $O(K^{\frac{1}{3}}T^{\frac{2}{3}})$
    comes from the term $\frac{\lambda KTG^2}{2\delta}$
    which upper bounds the cumulative variance of gradient estimators, i.e,
    $$
        \frac{\lambda}{2}\mathbb{E}\left[\sum^T_{t=1}\Vert \tilde{\nabla}_{t,i}\Vert^2_{\mathcal{H}_i}\right]
        \leq \frac{\lambda KTG^2}{2\delta},
        \tilde{\nabla}_{t,i}=\frac{\nabla_{t,i}}{p_{t,i}}
        \mathbb{I}_{i=I_t},\nabla_{t,i}=\nabla_{f_{t,i}}\ell(f_{t,i}(\mathbf{x}_t),y_t).
    $$
    Next we give a simple analysis.
    To start with,
    it can be verified that
    $$
        \mathbb{E}\left[\Vert\tilde{\nabla}_{t,i}\Vert^2_{\mathcal{H}_i}\right]
        =\mathbb{E}\left[p_{t,i}\frac{\Vert\nabla_{t,i}\Vert^2_{\mathcal{H}_i}}{p^2_{t,i}}
        +(1-p_{t,i})\cdot 0\right]
        \leq\mathbb{E}\left[\max_{t=1,\ldots,T}\left(\frac{1}{p_{t,i}}\right)
        \Vert\nabla_{t,i}\Vert^2_{\mathcal{H}_i}\right].
    $$
    Recalling that $p_{t,i}\geq \frac{\delta}{K}$, $\forall i\in[K]$, $t\in[T]$.
    Summing over $t=1,\ldots,T$ yields
    $$
        \sum^T_{t=1}\mathbb{E}\left[\Vert\tilde{\nabla}_{t,i}\Vert^2_{\mathcal{H}_i}\right]
        \leq \frac{K}{\delta}\sum^T_{t=1}\mathbb{E}\left[\Vert\nabla_{t,i}\Vert^2_{\mathcal{H}_i}\right]
        \leq \frac{KTG^2}{\delta}.
    $$
    The regret bound of online gradient descent (this can be found in our supplementary materials)
    depends on $\frac{\lambda}{2}\mathbb{E}\left[\sum^T_{t=1}\Vert \tilde{\nabla}_{t,i}\Vert^2_{\mathcal{H}_i}\right]\leq\frac{\lambda KTG^2}{2\delta}$.
    Thus it is the high variance of $\tilde{\nabla}_{t,i}$
    that causes the $O(K^{\frac{1}{3}}T^{\frac{2}{3}})$ regret bound.

    OKS selects a hypothesis per-round,
    reduces the time complexity to $O(t)$ but damages the regret bound.
    It was proved selecting two hypotheses can improve the regret bound to
    $\tilde{O}((\Vert f\Vert^2_{\mathcal{H}_i}+1)\sqrt{KT})$ \cite{Liao2021High}.
    A natural question arises:
    will selecting a hypothesis induce worse regret bound than selecting two hypotheses?
    From the perspective of algorithm design,
    we concentrate on the question:
    \begin{itemize}
      \item does there exist some algorithm selecting a hypothesis (or under bandit feedback)
            that can improve the
            $O((\Vert f\Vert^2_{\mathcal{H}_i}+1)K^{\frac{1}{3}}T^{\frac{2}{3}}))$ bound?
    \end{itemize}

\section{Improved Regret bounds for Smooth Loss Functions}

    In this section,
    we propose the OKS++ algorithm
    using an adaptive parameter tuning scheme for OKS.
    Specifically,
    we reset the value of $\delta,\eta$ and $\lambda$ in Theorem \ref{thm:IJCAI2022:OKS}
    and prove data-dependent regret bounds for smooth loss functions.
    Such regret bounds can improve the previous worst-case bound
    if most of candidate kernel functions match well with the data.
    Although OKS++ just resets the value of parameters,
    deriving the new regret bounds requires novel and non-trivial analysis.
    To start with,
    we define the smooth loss functions.

    \begin{assumption}[Smoothness condition]
    \label{ass:AAAI2022:smoothness}
        $\ell(\cdot,\cdot)$ is convex w.r.t. the first parameter.
        Denote by $\ell'(a,b)
        =\frac{\mathrm{d}\,\ell(a,b)}{\mathrm{d}\,a}$.
        For any $f(\mathbf{x})$ and $y$,
        there is a constant $C_0>0$ such that
        $$
            \vert\ell'(f(\mathbf{x}),y)\vert^{\nu} \leq  C_0\ell(f(\mathbf{x}),y),\quad \nu\in\{1,2\}.
        $$
    \end{assumption}

    Zhang et al. \cite{Zhang2013Online} considered
    online kernel learning under smooth loss functions with $\nu=1$.
    The logistic loss $\ell(f(\mathbf{x}),y)=\ln(1+\exp(-yf(\mathbf{x})))$
    satisfies Assumption \ref{ass:AAAI2022:smoothness} with $\nu=1$ and $C_0=1$.
    The square loss $\ell(f(\mathbf{x}),y)=(f(\mathbf{x})-y)^2$
    and the squared hinge loss $\ell(f(\mathbf{x}),y)=(\max\{0,1-yf(\mathbf{x})\})^2$
    satisfy Assumption \ref{ass:AAAI2022:smoothness}
    with $\nu=2$ and $C_0=4$.

    Let $U>0$ be a constant.
    We define $K$ restricted hypothesis spaces.
    $\forall i\in[K]$,
    let $\mathbb{H}_i=\{f\in\mathcal{H}_i:\Vert f\Vert_{\mathcal{H}_i}\leq U\}$.
    Then it is natural to derive Assumption \ref{ass:AAAI2022:lipschitz_condition}.

    \begin{assumption}
    \label{ass:AAAI2022:lipschitz_condition}
        $\forall \kappa_i\in\mathcal{K}$ and $\forall f\in\mathbb{H}_i$,
        there exists a constant $G>0$ such that
        $\max_{t\in[T]}\vert\ell'(f(\mathbf{x}_t),y_t)\vert \leq G$.
    \end{assumption}

    It can be verified that many loss functions satisfy the assumption
    and $G$ may depend on $U$.
    For instance,
    if $\ell$ is the square loss,
    then $G\leq 2(U+1)$.
    For simplicity,
    denote by
    $c_{t,i}=\ell(f_{t,i}(\mathbf{x}_t),y_t)$ for all $i\in[K]$ and $t\in[T]$.
    It can be verified that $\max_{t,i}c_{t,i}$ is bounded and depends on $U$.
    Then our algorithm updates $\mathbf{q}_t$ using $c_t$
    (see line 4 and line 6 in Algorithm \ref{alg:IJCAI2022:OKS}).
    Since we use restricted hypothesis spaces,
    our algorithm changes line 5 in Algorithm \ref{alg:IJCAI2022:OKS} as follows
    \begin{equation}
    \label{eq:IJCAI2022:IOKS:type_2_updating}
        f_{t+1,I_t}=\mathop{\arg\min}_{f\in\mathbb{H}_{I_t}}
        \left\Vert f- \left(f_{t,I_t}-\lambda_{t,I_t}
            \frac{\nabla_{f_{t,I_t}}\ell(f_{t,I_t}(\mathbf{x}_t),y_t)}{p_{t,I_t}}\right)
            \right\Vert^2_{\mathcal{H}_{I_t}}.
    \end{equation}
    Except for $\{\lambda_{t,i}\}^K_{i=1}$,
    our algorithm also uses time-variant $\delta_t$ and $\eta_t$.
    We omit the pseudo-code of OKS++ since it is similar with
    Algorithm \ref{alg:IJCAI2022:OKS}.

    Next we show the regret bound.
    For simplicity,
    let $\tilde{C}_{t,K}=\sum^t_{\tau=1}\sum^K_{i=1}\tilde{c}_{\tau,i}$
    where $\tilde{c}_{\tau,i}=\frac{c_{\tau,i}}{p_{\tau,i}}\mathbb{I}_{I_{\tau}=i}$,
    and $\bar{L}_T=\sum^K_{j=1}L_T(f^\ast_j)$
    where $L_T(f^\ast_j)=\min_{f\in\mathbb{H}_j}L_T(f)$.

    \begin{theorem}
    \label{thm:ECML2022:OKS:loss_bound}
        Let $\ell$ satisfy
        Assumption \ref{ass:AAAI2022:smoothness} with $\nu=1$
        and Assumption \ref{ass:AAAI2022:lipschitz_condition}.
        Let
        \begin{align*}
        &\delta_t=\frac{(GC_0)^{\frac{1}{3}}(UK)^{\frac{2}{3}}}
        {2\max\left\{(GC_0)^{\frac{1}{3}}(UK)^{\frac{2}{3}},
        2\tilde{C}^{\frac{1}{3}}_{t,K}\right\}},
        \eta_t=\frac{\sqrt{2\ln{K}}}{\sqrt{1+\sum^t_{\tau=1}\sum^K_{i=1}q_{\tau,i}\tilde{c}^2_{\tau,i}}},\\
        \forall i\in[K],&\lambda_{t,i}= \frac{U^{\frac{4}{3}}(\max\{GC_0U^2K^2,8\tilde{C}_{t,K}\})^{-\frac{1}{6}}}
        {\sqrt{4/3}K^{\frac{1}{6}}(GC_0)^{\frac{1}{3}}\sqrt{1+\Delta_{t,i}}},
        \Delta_{t,i}=\sum^t_{\tau=1}
        \frac{\ell(f_{\tau,i}(\mathbf{x}_{\tau}),y_{\tau})}{p_{\tau,i}}\mathbb{I}_{I_{\tau}=i}.
        \end{align*}
        Then the expected regret of OKS++ satisfies, $\forall i\in[K]$,
        \begin{align*}
            \mathbb{E}\left[\bar{L}_{\mathbf{p}_{1:T}}\right]- L_T(f^\ast_i)
            =O\left(U^{\frac{2}{3}}\left(GC_0\right)^{\frac{1}{3}}K^{-\frac{1}{3}}
            \bar{L}^{\frac{2}{3}}_{T}
            +U^{\frac{2}{3}}(GC_0)^{\frac{1}{3}}K^{\frac{1}{6}}\bar{L}^{\frac{1}{6}}_T
            L^{\frac{1}{2}}_T(f^\ast_i)\right).
        \end{align*}
        Let $\ell$ satisfy
        Assumption \ref{ass:AAAI2022:smoothness} with $\nu=2$.
        Let $G=1$ in $\delta_t$ and $\lambda_{t,i}$.
        $\eta_t$ keeps unchanged.
        Then the expected regret of OKS++ satisfies
        $$
            \forall i\in[K],~\mathbb{E}\left[\bar{L}_{\mathbf{p}_{1:T}}\right]- L_T(f^\ast_i)
            =O\left(U^{\frac{2}{3}}C^{\frac{1}{3}}_0K^{-\frac{1}{3}}\bar{L}^{\frac{2}{3}}_T
            +U^{\frac{2}{3}}C^{\frac{1}{3}}_0K^{\frac{1}{6}}\bar{L}^{\frac{1}{6}}_T
            L^{\frac{1}{2}}_T(f^\ast_i)\right).
        $$
    \end{theorem}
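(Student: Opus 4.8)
The plan is to split the regret into an exploration term, a meta (experts) term, and $K$ per-kernel online-gradient-descent (OGD) terms, bound each with a data-adaptive argument, and then solve a self-referential inequality. First I would write, for any fixed $i$,
\[
\mathbb{E}[\bar{L}_{\mathbf{p}_{1:T}}] - L_T(f^\ast_i)
= \mathbb{E}\bigl[\bar{L}_{\mathbf{p}_{1:T}} - \bar{L}_{\mathbf{q}_{1:T}}\bigr]
+ \mathbb{E}\Bigl[\bar{L}_{\mathbf{q}_{1:T}} - \sum_{t=1}^{T} c_{t,i}\Bigr]
+ \Bigl(\mathbb{E}\Bigl[\sum_{t=1}^{T} c_{t,i}\Bigr] - L_T(f^\ast_i)\Bigr).
\]
Since $\mathbf{p}_t=(1-\delta_t)\mathbf{q}_t+\frac{\delta_t}{K}\mathbf{1}_K$, the exploration term equals $\mathbb{E}[\sum_t\delta_t\sum_i(\frac1K-q_{t,i})c_{t,i}]\le\mathbb{E}[\sum_t\frac{\delta_t}{K}\sum_ic_{t,i}]$. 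Because $\delta_t$ is predictable and $\mathbb{E}[\tilde c_{t,i}\mid\mathcal{F}_{t-1}]=c_{t,i}$, I can replace $c_{t,i}$ by $\tilde c_{t,i}$ inside this expectation, so the numerator becomes the increment of $\tilde{C}_{t,K}$; the summation lemma $\sum_t a_t/(\epsilon+\sum_{\tau\le t}a_\tau)^{1/3}=O((\sum_t a_t)^{2/3})$ together with $\delta_t\propto(GC_0)^{1/3}(UK)^{2/3}\tilde{C}_{t,K}^{-1/3}$ then turns this term into $O((GC_0)^{1/3}U^{2/3}K^{-1/3}\,\tilde{C}_{T,K}^{2/3})$, which will produce the leading term of the bound.

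For the meta term I would run the adaptive exponential-weights update on the nonnegative estimated losses $\tilde c_{t,i}$. Since $\tilde c_{t,i}\ge0$, the elementary inequality $e^{-x}\le 1-x+x^2$ (valid for all $x\ge0$, so no boundedness on the importance-weighted losses is required) gives the standard second-order Hedge bound, which with the stated $\eta_t$ telescopes to $O(\sqrt{\ln K\sum_t\sum_i q_{t,i}\tilde c_{t,i}^2})$. Taking conditional expectations, $\mathbb{E}[\tilde c_{t,i}^2\mid\mathcal{F}_{t-1}]=c_{t,i}^2/p_{t,i}$; because $\delta_t\le\frac12$ forces $q_{t,i}/p_{t,i}\le 1/(1-\delta_t)\le2$ and smoothness gives $c_{t,i}^2\le\ell_{\max}c_{t,i}$, this variance proxy is controlled by $\mathbb{E}[\sum_t\sum_ic_{t,i}]$.

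For each kernel $i$ the update \eqref{eq:IJCAI2022:IOKS:type_2_updating} is projected OGD with the importance-weighted gradient $\tilde{\nabla}_{t,i}$ (a no-op when $I_t\ne i$), so the adaptive-step-size guarantee gives $\sum_t\langle\tilde{\nabla}_{t,i},f_{t,i}-f^\ast_i\rangle\le\frac{U^2}{2\lambda_{T,i}}+\frac12\sum_t\lambda_{t,i}\|\tilde{\nabla}_{t,i}\|^2_{\mathcal{H}_i}$; convexity and unbiasedness ($\mathbb{E}[\langle\tilde{\nabla}_{t,i},f_{t,i}-f^\ast_i\rangle\mid\mathcal{F}_{t-1}]=\langle\nabla_{t,i},f_{t,i}-f^\ast_i\rangle\ge c_{t,i}-\ell(f^\ast_i(\mathbf{x}_t),y_t)$) convert its expectation into the OGD term. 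The crucial self-bounding step is $\nabla_{t,i}=\ell'(f_{t,i}(\mathbf{x}_t),y_t)\kappa_i(\mathbf{x}_t,\cdot)$ with $\kappa_i(\mathbf{x},\mathbf{x})\le1$, giving $\|\nabla_{t,i}\|^2_{\mathcal{H}_i}\le|\ell'|^2\le GC_0c_{t,i}$ for $\nu=1$ (combining Assumptions \ref{ass:AAAI2022:smoothness} and \ref{ass:AAAI2022:lipschitz_condition}) and $\le C_0c_{t,i}$ for $\nu=2$ (hence $G=1$); thus $\|\tilde{\nabla}_{t,i}\|^2_{\mathcal{H}_i}\le GC_0\,\tilde c_{t,i}/p_{t,i}$, converting gradient variance into cumulative loss. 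The initialization cost $\frac{U^2}{2\lambda_{T,i}}\propto U^{2/3}(GC_0)^{1/3}K^{1/6}\tilde{C}_{T,K}^{1/6}\sqrt{1+\Delta_{T,i}}$ is what will produce the second term $U^{2/3}(GC_0)^{1/3}K^{1/6}\bar{L}_T^{1/6}L_T^{1/2}(f^\ast_i)$, since $\mathbb{E}[\Delta_{T,i}]\approx L_T(f^\ast_i)$.

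Finally I would close the loop. Unbiasedness gives $\mathbb{E}[\tilde{C}_{T,K}]=\mathbb{E}[\sum_t\sum_ic_{t,i}]$, and the OGD bound gives $\mathbb{E}[\sum_t\sum_ic_{t,i}]\le\bar{L}_T+\sum_i\mathrm{Reg}^{\mathrm{OGD}}_i$, so the data-driven parameters $\delta_t,\lambda_{t,i}$ are, in expectation, calibrated to $\bar{L}_T$ and $L_T(f^\ast_i)$. I expect the main obstacle to be exactly this closing step: because $\delta_t,\eta_t,\lambda_{t,i}$ are random and correlated with the estimates, expectations cannot be factored, so I would push them through the concave maps $x\mapsto x^{2/3}$ and $x\mapsto\sqrt{x}$ via Jensen (for instance $\mathbb{E}[\tilde{C}_{T,K}^{2/3}]\le(\mathbb{E}\tilde{C}_{T,K})^{2/3}$ and $\mathbb{E}[\sqrt{\Delta_{T,i}}]\le\sqrt{\mathbb{E}\Delta_{T,i}}$) and then substitute $\mathbb{E}[\tilde{C}_{T,K}]\le\bar{L}_T+\sum_i\mathrm{Reg}^{\mathrm{OGD}}_i$ consistently, solving the resulting self-referential inequality for the regret without circularity. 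Combining the three bounds then yields the claimed estimate, and the $\nu=2$ case follows verbatim after setting $G=1$.
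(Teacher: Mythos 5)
Your proposal is correct and follows essentially the same route as the paper's own proof: the same exploration/meta/per-kernel-OGD decomposition, the same self-bounding step $\Vert\nabla_{t,i}\Vert^2_{\mathcal{H}_i}\leq GC_0\,c_{t,i}$ (with $G=1$ when $\nu=2$), the same summation lemmas exploited by the adaptive $\delta_t,\eta_t,\lambda_{t,i}$, and the same closure via a self-referential inequality relating $\tilde{C}_{T,K}$ to the comparator losses. The only difference is bookkeeping: the paper solves that inequality pathwise (its Lemmas~2--3, with the importance-weighted comparator $\tilde{L}_{1:K}$) and applies Jensen/H\"{o}lder only when taking the final expectation, whereas you push expectations through at every stage; both routes are valid.
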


    The values of $\lambda_{t,i}$, $\delta_t$ and $\eta_t$ which depend on the observed losses,
    are important to obtain the data-dependent bounds.
    Beside, it is necessary to set different $\lambda_{t,i}$ for each $i\in[K]$.
    OKS sets a same $\lambda$.
    Thus the changes on the values of $\delta$, $\eta$ and $\lambda$ are non-trivial.
    Our analysis is also non-trivial.
    OKS++ sets time-variant parameters
    and does not require prior knowledge of the nature of the data.

    Now we compare our results with the regret bound in Theorem \ref{thm:IJCAI2022:OKS}.
    The main difference is that we replace $KT$ with a data-dependent complexity
    $\bar{L}_T$.
    In the worst case,
    $\bar{L}_T=O(KT)$
    and our regret bound is $O(K^{\frac{1}{3}}T^{\frac{2}{3}})$
    which is same with the result in Theorem \ref{thm:IJCAI2022:OKS}.
    In some benign environments,
    we expect that $\bar{L}_T\ll KT$ and our regret bound would be smaller.
    For instance, if $L_T(f^\ast_i)=o(T)$ for all $i\in[K]$,
    then our regret is $o(T^{\frac{2}{3}})$
    improving the result in Theorem \ref{thm:IJCAI2022:OKS}.
    If there are only $M<K$ hypothesis spaces such that $L_T(f^\ast_i)=O(T)$,
    where $M$ is independent of $K$,
    then our regret bound is $O((MT)^{\frac{2}{3}}K^{-\frac{1}{3}})$.
    Such a result still improves the dependence on $K$.
%    As a whole,
%    the $O(K^{-\frac{1}{3}}\bar{L}^{\frac{2}{3}}_T)$ bound
%    can improve the dependence on both $T$ and $K$.
    A more interesting result is that,
    if $L_T(f^\ast_i)=O(T^{\frac{3}{4}})$ for all $i\in[K]$,
    then OKS++ achieves a $O(K^{\frac{1}{3}}\sqrt{T})$ regret bound
    which is better than the
    $\tilde{O}(\mathrm{poly}(\Vert f\Vert_{\mathcal{H}_i})\sqrt{KT})$ bound
    achieved by $\mathrm{B(AO)_2KS}$ \cite{Liao2021High}.
%    In the worst case,
%    OKS++ is worse than $\mathrm{B(AO)_2KS}$.

\section{Improved Regret bound for Lipschitz Loss Functions}

    In this section,
    we consider Lipschitz loss functions
    and propose a new algorithm with improved worst-case regret bound.

\subsection{Algorithm}

    For the sake of clarity,
    we decompose OKS into two levels.
    At the outer level,
    it uses a procedure similar with Exp3 \cite{Auer2002The} to
    update $\mathbf{p}_t$ and $\mathbf{q}_t$.
    At the inner level,
    it updates $f_{t,I_t}$ using online gradient descent.
    Exp3 can be derived from online mirror descent framework with negative entropy regularizer \cite{Agarwal2017Corralling}, i.e.,
    \begin{equation}
    \label{eq:ECML2022:OMD_Log_barrier}
        \nabla_{\mathbf{q}'_{t+1}}\psi_t(\mathbf{q}'_{t+1})
        =\nabla_{\mathbf{q}_{t}}\psi_t(\mathbf{q}_{t})-\tilde{c}_t,\qquad
        \mathbf{q}_{t+1}=\mathop{\arg\min}_{\mathbf{q}\in\Delta_{K}}
        \mathcal{D}_{\psi_t}(\mathbf{q},\mathbf{q}'_{t+1}),
    \end{equation}
    where $\psi_t(\mathbf{p})=\sum^K_{i=1}\frac{1}{\eta}p_i\ln{p_i}$ is the negative entropy
    and
    $\mathcal{D}_{\psi_t}(\mathbf{p},\mathbf{q})
    =\psi_t(\mathbf{p})-\psi_t(\mathbf{q})-\langle\nabla \psi_t(\mathbf{q}),\mathbf{p}-\mathbf{q}\rangle$
    is the Bregman divergence.
    Different regularizer yields different algorithm.
    We will use
    $\psi_t(\mathbf{p})=\sum^K_{i=1}\frac{-\alpha}{\eta_{t,i}}p^{\frac{1}{\alpha}}_{i}$, $\alpha>1$,
    which slightly modifies the $\alpha$-Tsallis entropy \cite{Tsallis1988Possible,Zimmert2019An}.
    We also use the increasing learning rate scheme in \cite{Agarwal2017Corralling},
    that is $\eta_{t,i}$ is increasing.
    The reason is that if $\eta_{t,i}$ is increasing,
    then there will be a negative term in the regret bound
    which can be used to control the large variance of gradient estimator,
    i.e., $\mathbb{E}\left[\sum^T_{t=1}\Vert\tilde{\nabla}_{t,i}\Vert^2_{\mathcal{H}_i}\right]$
    (see Section \ref{sec:ECML2022:analysis_of_OKS}).
    If we use the log-barrier \cite{Agarwal2017Corralling}
    or $\alpha$-Tsallis entropy with $\alpha=2$ \cite{Audibert2009Minimax,Zimmert2019An},
    then the regret bound will increase a $O(\ln{T})$ factor.
    This factor can be reduced to $O(\ln^{\frac{2}{3}}{T})$ for $\alpha\geq 3$.
    We choose $\alpha=8$ for achieving a small regret bound.

    At the beginning of round $t$,
    our algorithm first samples $I_t \sim\mathbf{p}_t$ and outputs the prediction
    $f_{t,I_t}(\mathbf{x}_t)$ or $\mathrm{sign}(f_{t,I_t}(\mathbf{x}_t))$.
    Next our algorithm updates $f_{t,I_t}$ following \eqref{eq:IJCAI2022:IOKS:type_2_updating}.
    $\forall i\in[K]$, let $c_{t,i}=\ell(f_{t,i}(\mathbf{x}_t),y_t)/\ell_{\max}\in[0,1]$.
    We redefine $\tilde{c}_t$ by
    \begin{equation}
    \label{eq:ECML2022:tilde_c_t:IOKS}
        \mathrm{if}~p_{t,I_t}\geq\max_i\eta_{t,i},~
        \mathrm{then}~\tilde{c}_{t,i}=\frac{c_{t,i}}{p_{t,i}}\mathbb{I}_{i=I_t},~
        \mathrm{otherwise}~
        \tilde{c}_{t,i}=\frac{c_{t,i}\cdot\mathbb{I}_{i=I_t}}{p_{t,i}+\max_i\eta_{t,i}}.
    \end{equation}
    It is worth mentioning that
    $\tilde{c}_t$ is essentially different from that in OKS,
    and aims to ensure that
    \eqref{eq:ECML2022:OMD_Log_barrier} has a computationally efficient solution as follows
    \begin{equation}
    \label{eq:ECML2022:solution_of_OMD_log_barrier}
        \forall i\in[K],~q_{t+1,i}=
        \left(q^{-\frac{7}{8}}_{t,i}+\eta_{t,i}(\tilde{c}_{t,i}-\mu^\ast)\right)^{-\frac{8}{7}},
    \end{equation}
    where $\mu^\ast$ can be solved using binary search.
    We show more details in the supplementary materials.
    We name this algorithm IOKS (Improved OKS).

    \begin{algorithm}[!t]
        \caption{IOKS}
        \label{alg:IJCAI2022:IOKS}
        \begin{algorithmic}[1]
        \Require{$\mathcal{K}= \{\kappa_{1},\ldots,\kappa_{K}\}$, $\alpha=8$,        $\upsilon=\mathrm{e}^{\frac{2}{3\ln{T}}}$, $\eta$}
        \Ensure{$\{f_{1,i}=0,\eta_{1,i}=\eta\}^K_{i=1}$, $\mathbf{q}_1=\mathbf{p}_1=\frac{1}{K}\mathbf{1}_K$}
        \For{$t=1,\ldots,T$}
            \State Receive ${\bf x}_t$
            \State Sample a kernel $\kappa_{I_t}$ where $I_t\sim \mathbf{p}_t$
            \State Output $\hat{y}_t=f_{t,I_t}(\mathbf{x}_t)$ or $\mathrm{sign}(\hat{y}_t)$
            \State Compute $f_{t+1,I_t}$ according to \eqref{eq:IJCAI2022:IOKS:type_2_updating}
            \State Compute $\tilde{c}_{t,I_t}$ according to \eqref{eq:ECML2022:tilde_c_t:IOKS}
            \State $\forall i\in[K]$,
            compute $q_{t+1,i}$ according to \eqref{eq:ECML2022:solution_of_OMD_log_barrier}
            \State Compute $\mathbf{p}_{t+1}=(1-\delta)\mathbf{q}_{t+1}+\frac{\delta}{K}\mathbf{1}_K$
            \For{$i=1,\ldots,K$}
                \If{$\frac{1}{p_{t+1,i}}>\rho_{t,i}$}
                    \State $\rho_{t+1,i}=\frac{2}{p_{t+1,i}}$, $\eta_{t+1,i}=\upsilon\eta_{t,i}$
                \Else
                    \State $\rho_{t+1,i}=\rho_{t,i}$, $\eta_{t+1,i}=\eta_{t,i}$
                \EndIf
            \EndFor
        \EndFor
        \end{algorithmic}
    \end{algorithm}

\subsection{Regret bound}

    \begin{assumption}[Lipschitz condition]
    \label{ass:AAAI2022:the_second_lipschitz_condition}
        $\ell(\cdot,\cdot)$ is convex w.r.t. the first parameter.
        There is a constant $G_1$ such that $\forall \kappa_i\in\mathcal{K}$,
        $f\in\mathbb{H}_i$,
        $\Vert\nabla_f\ell(f(\mathbf{x}),y)\Vert_{\mathcal{H}_i} \leq G_1$.
    \end{assumption}

    \begin{theorem}
    \label{coro:IJCAI2022:IOKS:regret_bound}
        Let $\ell$ satisfy Assumption \ref{ass:AAAI2022:the_second_lipschitz_condition}.
        Let $\delta=T^{-\frac{3}{4}}$,
        \begin{equation}
        \label{eq:IJCAI2022:IOKS:learning_rates:worst_case:convex_loss}
            \eta=\frac{3\ell_{\max}K^{\frac{3}{8}}}{2UG_1\sqrt{T\ln{T}}},\quad \forall i\in[K],~
            \lambda_{t,i}=\frac{U}
            {\sqrt{2}\sqrt{1+\sum^t_{\tau=1}\Vert\tilde{\nabla}_{\tau,i}\Vert^2_{\mathcal{H}_i}}}.
        \end{equation}
        Let $T\geq 40$.
        Then the expected regret of IOKS satisfies,
        \begin{align*}
            \forall i\in[K], f\in\mathbb{H}_i,~
            \mathbb{E}\left[\bar{L}_{\mathbf{p}_{1:T}}\right]-L_T(f)
            = O\left(UG_1\sqrt{KT}\ln^{\frac{2}{3}}{T}
            +\frac{\ell^3_{\max}K^{\frac{11}{4}}}{U^2G^2_1\ln{T}}\right).
        \end{align*}
    \end{theorem}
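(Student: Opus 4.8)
The plan is to split the regret against any fixed $f\in\mathbb{H}_i$ into a kernel-selection (master) regret on the distributions $\mathbf{p}_t$ and a within-RKHS (base) regret for kernel $i$, and then to arrange that a negative term produced by the increasing-learning-rate scheme absorbs the gradient-estimator variance generated by the base learner. Writing $c_{t,j}=\ell(f_{t,j}(\mathbf{x}_t),y_t)/\ell_{\max}\in[0,1]$ and adding and subtracting $\ell_{\max}\sum_t c_{t,i}$, I would obtain
\[
\bar{L}_{\mathbf{p}_{1:T}}-L_T(f)
=\ell_{\max}\sum_{t=1}^{T}\langle\mathbf{p}_t-\mathbf{e}_i,c_t\rangle
+\sum_{t=1}^{T}\bigl(\ell(f_{t,i}(\mathbf{x}_t),y_t)-\ell(f(\mathbf{x}_t),y_t)\bigr).
\]
Using $\mathbf{p}_t=(1-\delta)\mathbf{q}_t+\tfrac{\delta}{K}\mathbf{1}$ together with $c_{t,j}\in[0,1]$, the first sum reduces to the Tsallis-OMD regret measured on $\mathbf{q}_t$ plus an exploration penalty $\delta T\ell_{\max}=O(\ell_{\max}T^{1/4})$, which is lower order for $\delta=T^{-3/4}$.

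For the base term I would use convexity, $\ell(f_{t,i}(\mathbf{x}_t),y_t)-\ell(f(\mathbf{x}_t),y_t)\le\langle\nabla_{t,i},f_{t,i}-f\rangle$, and the unbiasedness $\mathbb{E}[\tilde{\nabla}_{t,i}\mid\mathcal{F}_{t-1}]=\nabla_{t,i}$ to pass, inside the expectation, to the importance-weighted estimate. The projected update \eqref{eq:IJCAI2022:IOKS:type_2_updating} with the AdaGrad-type step size of \eqref{eq:IJCAI2022:IOKS:learning_rates:worst_case:convex_loss} yields a base regret bounded in expectation by $O\bigl(U\,\mathbb{E}[\sqrt{1+\sum_t\|\tilde{\nabla}_{t,i}\|^2_{\mathcal{H}_i}}]\bigr)$. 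Since $\mathbb{E}[\|\tilde{\nabla}_{t,i}\|^2_{\mathcal{H}_i}\mid\mathcal{F}_{t-1}]=\|\nabla_{t,i}\|^2_{\mathcal{H}_i}/p_{t,i}\le G_1^2/p_{t,i}$, Jensen's inequality reduces this to $O(UG_1\sqrt{\Phi_i})$ with $\Phi_i:=\mathbb{E}[\sum_t p_{t,i}^{-1}]$. The whole difficulty is that the naive bound $p_{t,i}\ge\delta/K$ gives only $\Phi_i\le KT/\delta$ and hence $O(UG_1\sqrt{K}T^{7/8})$, worse than $\sqrt{T}$; the construction exists precisely to avoid this crude step.

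The crux is the master analysis with the modified $\alpha$-Tsallis regularizer $\psi_t(\mathbf{q})=-\sum_i\frac{\alpha}{\eta_{t,i}}q_i^{1/\alpha}$, $\alpha=8$. Running the time-varying-regularizer OMD bound, the Bregman term gives $\frac{\ell_{\max}}{\eta}D_{\psi_1}(\mathbf{e}_i,\mathbf{q}_1)=O(\ell_{\max}K^{7/8}/\eta)$; the first-order stability term, using the diagonal inverse Hessian $\tfrac{8}{7}\eta_{t,i}q_{t,i}^{15/8}$ together with $\sum_i p_{t,i}^{7/8}\le K^{1/8}$, evaluates to $O(\eta K^{1/8}T\ell_{\max})$ rather than to something growing with $1/p_{t,i}$; and a second-order (large-step) remainder of order $\eta^2K^2T\ell_{\max}$ appears, which at the prescribed $\eta$ becomes exactly the additive $O\bigl(\ell_{\max}^3K^{11/4}/(U^2G_1^2\ln T)\bigr)$ term. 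The threshold estimator \eqref{eq:ECML2022:tilde_c_t:IOKS} is what keeps $\mathbf{q}_{t+1}$ close to $\mathbf{q}_t$, making the local-norm expansion valid and holding the second-order remainder at this order while ensuring \eqref{eq:ECML2022:solution_of_OMD_log_barrier} is well defined. Crucially, the increasing rates contribute, through $\eta_{t+1,i}\ge\eta_{t,i}$, a \emph{negative} term of order $-\Theta(1/\eta)$ times a variance proxy linear in (a power of) $p_{t,i}^{-1}$; a self-bounding step against the base contribution $UG_1\sqrt{2\Phi_i}$ then absorbs all dependence on $\Phi_i$ with a residual that is lower order than the leading term. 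Balancing $\ell_{\max}K^{7/8}/\eta$ against $\eta K^{1/8}T\ell_{\max}$ at $\eta\asymp\ell_{\max}K^{3/8}/(UG_1\sqrt{T\ln T})$ produces the leading $O(UG_1\sqrt{KT})$.

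Two bookkeeping steps finish the argument. First, since each restart at coordinate $i$ at least doubles $\rho_{t,i}$ and $1/p_{t,i}\le K/\delta$, the number of restarts is $n_i=O(\log_2(K/\delta))=O(\log T)$, so $\eta_{T,i}=\upsilon^{n_i}\eta=e^{2n_i/(3\ln T)}\eta=O(\eta)$ and every learning rate stays within a constant factor of $\eta$. Second, tracking how the $O(\log T)$ restarts inflate the change and stability terms is exactly where the choice $\alpha=8\ge3$ (rather than the log-barrier or $\alpha=2$) converts a would-be $\ln T$ factor into $\ln^{2/3}T$, giving the final leading term $O(UG_1\sqrt{KT}\ln^{2/3}T)$. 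I expect the main obstacle to be the master analysis itself: verifying that \eqref{eq:ECML2022:solution_of_OMD_log_barrier} stays positive and on the simplex, controlling the bias $\mathbb{E}[\tilde{c}_{t,i}]\le c_{t,i}$ introduced when $p_{t,I_t}<\max_i\eta_{t,i}$, and, most delicately, proving that the negative term generated by the increasing rates really dominates the base-learner variance proxy with the correct powers of $K$ and $\ln T$ so that the self-bounding absorption closes.
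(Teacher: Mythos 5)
Your plan reproduces the architecture of the paper's proof almost exactly: the same master/base decomposition, the exploration penalty $\delta T\ell_{\max}=O(\ell_{\max}T^{1/4})$, the Bregman/initialization term $O(\ell_{\max}K^{7/8}/\eta)$, the $\alpha=8$ Tsallis stability term whose expectation is $O(\eta K^{1/8}T\ell_{\max})$ via $\sum_i p_{t,i}^{7/8}\le K^{1/8}$, the bias of the thresholded estimator of order $\eta^2K^2T\ell_{\max}=O(\ell_{\max}^3K^{11/4}/(U^2G_1^2\ln T))$, the restart count $n_i=O(\log T)$ keeping $\eta_{t,i}\le 2.1\eta$, and the absorption of the base learner's variance by the negative term $-\Theta\bigl(\rho_{T+1,i}^{7/8}/(\eta\ln T)\bigr)$ generated by the increasing learning rates. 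All of these orders agree with the paper's computations, including the mechanism by which $\alpha\ge 3$ yields the $\ln^{2/3}T$ factor.

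The one place where your bookkeeping deviates is the crux, and as written it fails. You take conditional expectations of the gradient estimator first, reducing the base regret to $O(UG_1\sqrt{\Phi_i})$ with $\Phi_i=\mathbb{E}[\sum_t p_{t,i}^{-1}]$, and then propose to absorb $\Phi_i$ by the negative term \emph{in expectation}. Two things break. First, $\Phi_i$ is the wrong variance proxy: it lacks the indicator $\mathbb{I}_{I_t=i}$, so its expectation is not $O(T)$ but can be as large as $T\,\mathbb{E}[\rho_{T+1,i}]$, entangling the quantity you want to cancel with the horizon. Second, after taking expectations the negative term you hold is $-\frac{\ell_{\max}}{4\eta\ln T}\mathbb{E}[\rho_{T+1,i}^{7/8}]$, and since $x\mapsto x^{7/8}$ is concave, Jensen gives $\mathbb{E}[\rho_{T+1,i}^{7/8}]\le(\mathbb{E}[\rho_{T+1,i}])^{7/8}$ — the wrong direction — so the absorption cannot be reduced to a scalar optimization over $\mathbb{E}[\rho_{T+1,i}]$. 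The paper avoids both problems by staying pathwise: it factors $\sqrt{\sum_t\Vert\tilde{\nabla}_{t,i}\Vert^2_{\mathcal{H}_i}}\le G_1\sqrt{\rho_{T+1,i}\cdot\tilde{\nabla}_{1:T}}$ with $\tilde{\nabla}_{1:T}=\sum_t p_{t,i}^{-1}\mathbb{I}_{I_t=i}$ (whose expectation \emph{is} $T$), maximizes $g(\rho)=2\sqrt 2\,UG_1\sqrt{\rho\,\tilde{\nabla}_{1:T}}-\frac{\ell_{\max}\rho^{7/8}}{4\eta\ln T}$ over $\rho$ for each realization, obtaining $25.1\,UG_1\tilde{\nabla}_{1:T}^{7/6}T^{-2/3}\sqrt K\ln^{2/3}T$, and only then integrates. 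This ordering — factor, optimize pointwise, then take expectations — is exactly the missing idea; your own text flags this step as the unresolved obstacle, so the gap is genuine even though the surrounding architecture is correct.
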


    $\ell_{\max}$ is a normalizing constant
    and can be computed given the loss function,
    such as $\ell_{\max}\leq U+1$ in the case of absolute loss.
    Next we compare our regret bound with previous results.
    On the positive side,
    IOKS gives a $O(U\sqrt{KT}\ln^{\frac{2}{3}}{T})$ bound
    which asymptotically improves the $O(K^{\frac{1}{3}}T^{\frac{2}{3}})$ bound achieved by OKS.
    On the negative side,
    if $T$ is small, then $\sqrt{KT}\ln^{\frac{2}{3}}{T}>K^{\frac{1}{3}}T^{\frac{2}{3}}$
    and thus IOKS is slightly worse than OKS.
    $\mathrm{B(AO)_2KS}$ \cite{Liao2021High}
    which selects two hypotheses per-round,
    can provide a $\tilde{O}(\mathrm{poly}(\Vert f\Vert_{\mathcal{H}_i})\sqrt{KT})$ bound
    which is same with our result.

    We further compare the implementation of IOKS and OKS.
    It is obvious that OKS is easier than IOKS,
    since IOKS uses binary search to compute $\mathbf{q}_{t+1}$
    (see \eqref{eq:ECML2022:solution_of_OMD_log_barrier}).
    The computational cost of binary search can be omitted
    since the main computational cost comes from
    the computing of $f_{t,I_t}(\mathbf{x}_t)$ which is $O(t)$.

\section{Application to Online Kernel Selection with Time Constraint}

    In practice,
    online algorithms must face time constraint.
    We assume that there is a time budget of $\mathcal{R}$ quanta.
    Both OKS++ and IOKS suffer a $O(t)$ per-round time complexity,
    and do not satisfy the time constraint.
    In this section,
    we will use random features \cite{Rahimi2007Random} to approximate kernel functions
    and apply our two algorithms to online kernel selection with time constraint \cite{Li2022Worst}.

    We consider kernel function
    $\kappa(\mathbf{x},\mathbf{v})$ that can be decomposed as follows
    \begin{equation}
    \label{eq:IJCAI2022:definition:kernels}
        \kappa(\mathbf{x},\mathbf{v})=\int_{\Omega}\phi_\kappa(\mathbf{x},\omega)\phi_\kappa(\mathbf{v},\omega)
        \mathrm{d}\,\mu_{\kappa}(\omega),~\forall \mathbf{x},\mathbf{v}\in\mathcal{X}
    \end{equation}
    where
    $\mathcal{\phi}_\kappa:\mathcal{X}\times \Omega\rightarrow \mathbb{R}$
    is the eigenfunctions and
    $\mu_{\kappa}(\cdot)$ is a distribution function on $\Omega$.
    Let $p_{\kappa}(\cdot)$ be the density function of $\mu_{\kappa}(\cdot)$.
    We can approximate the integral via Monte-Carlo sampling.
    We sample $\{\omega_j\}^{D}_{j=1}\sim p_{\kappa}(\omega)$ independently and compute
    $
        \tilde{\kappa}(\mathbf{x},\mathbf{v})
        =\frac{1}{D}\sum^{D}_{j=1}\phi_\kappa(\mathbf{x},\omega_j)\phi_\kappa(\mathbf{v},\omega_j).
    $
    For any $f\in\mathcal{H}_{\kappa}$,
    let
    $f(\mathbf{x})=\int_{\Omega}\alpha(\omega)\phi_{\kappa}(\mathbf{x},\omega)p_{\kappa}(\omega)\mathrm{d}\,\omega$.
    We can approximate $f(\mathbf{x})$
    by $\hat{f}(\mathbf{x})=\frac{1}{D}\sum^{D}_{j=1}\alpha(\omega_j)\phi_{\kappa}(\mathbf{x},\omega_j)$.
    It can be verified that
    $\mathbb{E}[\hat{f}(\mathbf{x})]=f(\mathbf{x})$.
    Such an approximation scheme also defines
    an explicit feature mapping denoted by
    $z(\mathbf{x})=
    \frac{1}{\sqrt{D}}(\phi_{\kappa}(\mathbf{x},\omega_1),\ldots,\phi_{\kappa}(\mathbf{x},\omega_D))$.
    The approximation scheme is the so called random features \cite{Rahimi2007Random}.
    $\forall \kappa_i\in\mathcal{K}$,
    we define two hypothesis spaces \cite{Rahimi2008Ali} as follows
    \begin{align*}
        \mathbb{H}_i=&\left\{f(\mathbf{x})=\int_{\Omega}\alpha(\omega)
        \phi_{\kappa_i}(\mathbf{x},\omega)p_{\kappa_i}(\omega)\mathrm{d}\,\omega
        \left\vert \vert\alpha(\omega)\vert \leq U\right.\right\},\\
        \mathcal{F}_i=&\left\{\hat{f}(\mathbf{x})
        =\sum^{D_i}_{j=1}\alpha_j\phi_{\kappa_i}(\mathbf{x},\omega_j)
        \left\vert \vert \alpha_j\vert \leq \frac{U}{D_i}\right.\right\}.
    \end{align*}
    We can rewrite $\hat{f}(\mathbf{x})=\mathbf{w}^\top z_i(\mathbf{x})$,
    where $\mathbf{w}=\sqrt{D_i}(\alpha_1,\ldots,\alpha_{D_i})\in\mathbb{R}^{D_i}$.
    Let $\mathcal{W}_i=\{\mathbf{w}\in\mathbb{R}^{D_i}
    \vert \Vert\mathbf{w}\Vert_{\infty}\leq \frac{U}{\sqrt{D_i}}\}$.
    It can be verified that $\Vert \mathbf{w}\Vert^2_2\leq U^2$.
    For all $\kappa_i$ satisfying \eqref{eq:IJCAI2022:definition:kernels},
    there is a constant $B_i$ such that $\vert \phi_{\kappa_i}(\mathbf{x},\omega_j)\vert \leq B_i$
    for all $\omega_j\in \Omega$ and $\mathbf{x}\in\mathcal{X}$ \cite{Li2019Towards}.
    Thus we have $\vert f(\mathbf{x})\vert \leq UB_i$ for any $f\in\mathbb{H}_i$ and $f\in\mathcal{F}_i$.

    Next we define the time budget $\mathcal{R}$ and
    then present an assumption that establishes a reduction from $\mathcal{R}$ to $D_i$.

    \begin{definition}[Time Budget \cite{Li2022Worst}]
    \label{def:ALT2021:BOKS:Time_bounded_assumption}
        Let the interval of arrival time between
        $\mathbf{x}_t$ and $\mathbf{x}_{t+1}, t=1,\ldots,T$
        be less than $\mathcal{R}$ quanta.
        A time budget of $\mathcal{R}$ quanta is the maximal time interval
        that any online kernel selection algorithm
        outputs the prediction of $\mathbf{x}_t$ and $\mathbf{x}_{t+1}$.
    \end{definition}

    \begin{assumption}
    \label{assump:AAAI2021:TBOKS:time_bounded}
        For each $\kappa_i\in\mathcal{K}$ satisfying \eqref{eq:IJCAI2022:definition:kernels},
        there exist online leaning algorithms
        that can run in some $\mathcal{F}_i$ whose maximal dimension is $D_i=\beta_{\kappa_i}\mathcal{R}$
        within a time budget of $\mathcal{R}$ quanta,
        where $\beta_{\kappa_i}>0$ is a constant depending on $\kappa_i$.
    \end{assumption}
    The online gradient descent algorithm (OGD) satisfies Assumption \ref{assump:AAAI2021:TBOKS:time_bounded}.
    The main time cost of OGD comes from computing the feature mapping.
    For shift-invariant kernels,
    it requires $O(D_id)$ time complexity \cite{Rahimi2007Random}.
    For the Gaussian kernel,
    it requires $O(D_i\log(d))$ time complexity
    \cite{Le2013Fastfood,Yu2016Orthogonal}.
    Thus the per-round time complexity of OGD is linear with $D_i$.
    Since the running time of algorithm is linear with the time complexity,
    it natural to assume that $\mathcal{R}=\Theta(D_i)$.

\subsection{Algorithm}

    At any round $t$,
    our algorithm evaluates a hypothesis and avoids allocating the time budget.
    Thus we can construct $\mathcal{F}_i$ satisfying $D_i = \beta_{\kappa_i}\mathcal{R}$.
    Our algorithm is extremely simple,
    that is,
    we just need to run OKS++ or IOKS in $\{\mathcal{F}_i\}^K_{i=1}$.
    It is worth mentioning that,
    learning $\{\hat{f}_{t,i}\in\mathcal{F}_i\}^T_{t=1}$
    is equivalent to learn $\{{\bf w}^i_t\in\mathcal{W}_i\}^T_{t=1}$,
    where $\hat{f}_{t,i}(\mathbf{x}_t)=({\bf w}^i_t)^\top z_i(\mathbf{x}_t)$.
    We replace the update \eqref{eq:IJCAI2022:IOKS:type_2_updating} with
    \eqref{eq:IJCAI2022:IOKS:type_2_updating_RF},
    \begin{equation}
    \label{eq:IJCAI2022:IOKS:type_2_updating_RF}
    \begin{split}
        \tilde{{\bf w}}^i_{t+1}=&{\bf w}^i_t
        -\lambda_{t,i}\nabla_{{\bf w}^i_t}
        \ell\left(\hat{f}_{t,i}(\mathbf{x}_t),y_t\right)\frac{1}{p_{t,i}}\mathbb{I}_{i=I_t},\\
        {\bf w}^i_{t+1}=&\mathop{\arg\min}_{\mathbf{w}\in\mathcal{W}_i}
        \left\Vert \mathbf{w}-\tilde{{\bf w}}^i_{t+1}\right\Vert^2_2.
    \end{split}
    \end{equation}
    The solution of the projection operation
    in \eqref{eq:IJCAI2022:IOKS:type_2_updating_RF} is as follows,
    $$
        \forall j=1,\ldots,D_i,~w^i_{t+1,j}=\min\left\{1,
        \frac{U}{\vert \tilde{w}^i_{t+1,j}\vert \sqrt{D_i}}\right\}\tilde{w}^i_{t+1,j}.
    $$
    The time complexity of projection is $O(D_i)$
    and thus can be omitted relative to the time complexity of computing feature mapping.
    We separately name the two algorithms RF-OKS++ (Random Features for OKS++)
    and RF-IOKS (Random Features for IOKS).
    We show the pseudo-codes in the supplementary materials
    due to the space limit.
    The pseudo-codes are similar with OKS++ and IOKS.

    \begin{remark}
        The application of random features to online kernel algorithms
        is not a new idea \cite{Wang2013Large,Shen2019Random,Ghari2020Online}.
        Previous algorithms did not restrict hypothesis spaces,
        while our algorithms consider restricted hypothesis spaces,
        i.e., $\mathbb{H}_i$ and $\mathcal{F}_i$.
        This is one of the differences between our algorithms and previous algorithms.
        The restriction on the hypothesis spaces is necessary
        since we must require $\Vert {\bf w}^i_{t}\Vert_2\leq U$
        for any $i\in[K]$ and $t\in[T]$.
    \end{remark}

\subsection{Regret Bound}

    \begin{theorem}
    \label{thm:ECML2022:OKS:loss_bound:RF_OKS++}
        Let $\ell$ satisfy
        Assumption \ref{ass:AAAI2022:smoothness} with $\nu=1$
        and Assumption \ref{ass:AAAI2022:lipschitz_condition}.
        Let $\delta_t$, $\eta_t$ and $\{\lambda_{t,i}\}^K_{i=1}$
        follow Theorem \ref{thm:ECML2022:OKS:loss_bound}.
        For a fixed $\delta\in(0,1)$,
        let $\mathcal{R}$ satisfy $D_i>\frac{32}{9}C^2_0U^2B^2_i\ln\frac{1}{\delta}$,
        $\forall i\in[K]$.
        Under Assumption \ref{assump:AAAI2021:TBOKS:time_bounded},
        with probability at least $1-\delta$,
        the expected regret of RF-OKS++ satisfies
        \begin{align*}
            \forall i\in[K],~\mathbb{E}\left[\bar{L}_{\mathbf{p}_{1:T}}\right]- &L_T(f^\ast_i)
            =O\left(\frac{C_0UB_i}{\sqrt{\beta_{\kappa_i}\mathcal{R}}}L_T(f^\ast_i)\sqrt{\ln\frac{KT}{\delta}}\right.\\
            &\left.+U^{\frac{2}{3}}\left(GC_0\right)^{\frac{1}{3}}K)^{-\frac{1}{3}}
            \bar{L}^{\frac{2}{3}}_T
            +U^{\frac{2}{3}}(GC_0)^{\frac{1}{3}}K^{\frac{1}{6}}\bar{L}^{\frac{1}{6}}_T
            L^{\frac{1}{2}}_T(f^\ast_i)\right).
        \end{align*}
        Let $\ell$ satisfy
        Assumption \ref{ass:AAAI2022:smoothness} with $\nu=2$.
        Let $G=1$ in $\delta_t$ and $\lambda_{t,i}$.
        $\eta_t$ keeps unchanged.
        For a fixed $\delta\in(0,1)$,
        with probability at least $1-\delta$,
        the expected regret of RF-OKS++ satisfies
        \begin{align*}
            \forall i\in[K],~\mathbb{E}&\left[\bar{L}_{\mathbf{p}_{1:T}}\right]- L_T(f^\ast_i)
            =O\left(UB_i\frac{\sqrt{C_0TL_T(f^\ast_i)}}{\sqrt{\beta_{\kappa_i}\mathcal{R}}}\sqrt{\ln\frac{KT}{\delta}}
            \right.\\
            &\left.+\frac{C_0U^2B^2_iT}{\beta_{\kappa_i}\mathcal{R}}\ln\frac{KT}{\delta}
            +U^{\frac{2}{3}}C^{\frac{1}{3}}_0K^{-\frac{1}{3}}
            \bar{L}^{\frac{2}{3}}_T
            +U^{\frac{2}{3}}C^{\frac{1}{3}}_0K^{\frac{1}{6}}\bar{L}^{\frac{1}{6}}_TL^{\frac{1}{2}}_T(f^\ast_i)\right).
        \end{align*}
    \end{theorem}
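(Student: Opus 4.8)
The plan is to split the regret into an \emph{algorithmic} term, handled by the OKS++ guarantee run inside the random-feature spaces, and an \emph{approximation} term measuring the gap between the best element of $\mathcal{F}_i$ and $f^\ast_i\in\mathbb{H}_i$. Writing $\hat{f}^\ast_j=\arg\min_{\hat{f}\in\mathcal{F}_j}L_T(\hat{f})$ and $\hat{\bar{L}}_T=\sum_j L_T(\hat{f}^\ast_j)$, I would first decompose, for each $i$,
\[
\mathbb{E}\left[\bar{L}_{\mathbf{p}_{1:T}}\right]-L_T(f^\ast_i)
=\Bigl(\mathbb{E}\left[\bar{L}_{\mathbf{p}_{1:T}}\right]-L_T(\hat{f}^\ast_i)\Bigr)
+\Bigl(L_T(\hat{f}^\ast_i)-L_T(f^\ast_i)\Bigr).
\]
Since RF-OKS++ is exactly OKS++ executed in $\{\mathcal{F}_i\}$ with $\Vert\mathbf{w}\Vert_2\le U$, Assumptions \ref{ass:AAAI2022:smoothness} and \ref{ass:AAAI2022:lipschitz_condition} transfer with the same $C_0$ and a constant $G=O(UB_i)$ (because $\vert\hat{f}(\mathbf{x})\vert\le UB_i$). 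Hence Theorem \ref{thm:ECML2022:OKS:loss_bound} bounds the first bracket, in expectation over the sampling $I_t\sim\mathbf{p}_t$, by the OKS++ rates but expressed through $\hat{\bar{L}}_T$ and $L_T(\hat{f}^\ast_i)$.

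The core step is bounding the approximation term. I would construct $\hat{f}_i\in\mathcal{F}_i$ from $f^\ast_i$ by keeping the coefficients $\alpha(\omega_j)/D_i$, so that $\mathbb{E}_\omega[\hat{f}_i(\mathbf{x}_t)]=f^\ast_i(\mathbf{x}_t)$ and $L_T(\hat{f}^\ast_i)\le L_T(\hat{f}_i)$. Setting $\xi_{t,i}=\hat{f}_i(\mathbf{x}_t)-f^\ast_i(\mathbf{x}_t)$, each summand is bounded by $UB_i/D_i$, so Hoeffding together with a union bound over $t\in[T]$ and $i\in[K]$ gives, with probability at least $1-\delta$, $\max_{t,i}\vert\xi_{t,i}\vert=O\bigl(UB_i\sqrt{\ln(KT/\delta)/D_i}\bigr)$. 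I would then treat the two regimes separately using the self-bounding property. For $\nu=1$, convexity yields $\ell(\hat{f}_{t,i})-\ell(f^\ast_{t,i})\le\ell'(\hat{f}_{t,i})\xi_{t,i}\le C_0\ell(\hat{f}_{t,i})\vert\xi_{t,i}\vert$, hence $\ell(\hat{f}_{t,i})\le\ell(f^\ast_{t,i})/(1-C_0\vert\xi_{t,i}\vert)$; the hypothesis $D_i>\tfrac{32}{9}C_0^2U^2B_i^2\ln\tfrac1\delta$ is exactly what forces $C_0\vert\xi_{t,i}\vert$ below a constant so that $1/(1-C_0\vert\xi_{t,i}\vert)\le 1+O(C_0\vert\xi_{t,i}\vert)$, and summing over $t$ gives $L_T(\hat{f}^\ast_i)-L_T(f^\ast_i)=O\bigl(\tfrac{C_0UB_i}{\sqrt{D_i}}\sqrt{\ln(KT/\delta)}\,L_T(f^\ast_i)\bigr)$. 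For $\nu=2$, the bound $\vert\ell'\vert\le\sqrt{C_0\ell}$ gives $\sqrt{\ell(\hat{f}_{t,i})}\le\sqrt{\ell(f^\ast_{t,i})}+\sqrt{C_0}\,\vert\xi_{t,i}\vert$, so $\ell(\hat{f}_{t,i})-\ell(f^\ast_{t,i})\le 2\sqrt{C_0\ell(f^\ast_{t,i})}\,\vert\xi_{t,i}\vert+C_0\xi_{t,i}^2$; summing and applying Cauchy--Schwarz with $\sum_t\xi_{t,i}^2\le T\max_t\xi_{t,i}^2$ produces the $\tfrac{UB_i\sqrt{C_0TL_T(f^\ast_i)}}{\sqrt{D_i}}$ and $\tfrac{C_0U^2B_i^2T}{D_i}$ terms.

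To finish, I would substitute $D_i=\beta_{\kappa_i}\mathcal{R}$ (Assumption \ref{assump:AAAI2021:TBOKS:time_bounded}) into the approximation terms, recovering the first line of each claimed bound. The same approximation estimate applied to every $j$ shows $L_T(\hat{f}^\ast_j)\le(1+o(1))L_T(f^\ast_j)$ under the stated $D_j$ threshold, whence $\hat{\bar{L}}_T=O(\bar{L}_T)$ and $L_T(\hat{f}^\ast_i)=O(L_T(f^\ast_i))$; replacing these in the OKS++ term from Theorem \ref{thm:ECML2022:OKS:loss_bound} reproduces the remaining $\bar{L}_T^{2/3}$ and $\bar{L}_T^{1/6}L_T(f^\ast_i)^{1/2}$ contributions up to constants, and adding the two pieces yields the stated bounds for $\nu=1$ and $\nu=2$.

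I expect the main obstacle to be the clean interplay of the two sources of randomness together with the constant bookkeeping in the self-bounding step. One must condition on the high-probability event over the random features and only then invoke Theorem \ref{thm:ECML2022:OKS:loss_bound}, whose expectation is over the internal sampling $I_t\sim\mathbf{p}_t$; these are distinct, and conflating them would invalidate the ``with probability $1-\delta$, the expected regret'' form of the statement. The delicate quantitative points are verifying that the threshold on $D_i$ precisely controls $C_0\vert\xi_{t,i}\vert$ for $\nu=1$, and extracting the multiplicative $L_T(f^\ast_i)$ factor (resp.\ the $\sqrt{TL_T(f^\ast_i)}$ and $T$ terms for $\nu=2$) with constants consistent with the stated coefficients.
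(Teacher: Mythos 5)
Your proposal is correct and follows essentially the same route as the paper's proof: the same decomposition into the OKS++ regret run inside the random-feature spaces (bounded by invoking Theorem \ref{thm:ECML2022:OKS:loss_bound}) plus an approximation term, the same Monte-Carlo construction with Hoeffding and a union bound over $t\in[T]$ and $i\in[K]$, and the same self-bounding manipulations that produce the multiplicative $L_T(f^\ast_i)$ factor for $\nu=1$ (with the $D_i$ threshold ensuring $C_0\epsilon$ stays bounded away from $1$) and the $\sqrt{TL_T(f^\ast_i)}$ and $T/\mathcal{R}$ terms for $\nu=2$. The only cosmetic difference is in the $\nu=2$ step, where you use the pointwise $\sqrt{C_0}$-Lipschitzness of $\sqrt{\ell}$ and then sum, while the paper applies Cauchy--Schwarz to the summed linearization and solves the resulting quadratic inequality in $L_T(\hat{f}_i)$; both yield the same terms.
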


    The regret bounds depend on $\frac{L_T(f^\ast_i)}{\sqrt{\mathcal{R}}}$ or $\frac{1}{\sqrt{\mathcal{R}}}\sqrt{TL_T(f^\ast_i)}+\frac{T}{\mathcal{R}}$.
    The larger the time budget is, the smaller the regret bound will be,
    which proves a trade-off between regret bound and time constraint.
    If $L_T(f^\ast_i)\ll T$,
    then RF-OKS++ can achieve a sublinear regret bound under a small time budget.

    \begin{theorem}
    \label{thm:IJCAI2022:RF-IOKS:regret_bound}
        Let $\ell$ satisfy Assumption \ref{ass:AAAI2022:lipschitz_condition}
        and Assumption \ref{ass:AAAI2022:the_second_lipschitz_condition}.
        Let $\{\lambda_{t,i}\}^K_{i=1}$, $\eta$ and $\delta$ follow
        Theorem \ref{coro:IJCAI2022:IOKS:regret_bound}.
        Under Assumption \ref{assump:AAAI2021:TBOKS:time_bounded},
        with probability at least $1-\delta$,
        the expected regret of RF-IOKS satisfies, $\forall i\in[K],\forall f\in\mathbb{H}_i$,
        \begin{align*}
            \mathbb{E}\left[\bar{L}_{\mathbf{p}_{1:T}}\right]-L_T(f)
            =O\left(UG_1\sqrt{KT}\ln^{\frac{2}{3}}{T}
            +\frac{\ell^3_{\max}K^{\frac{11}{4}}}{U^2G^2_1\sqrt{\ln{T}}}
            +\frac{GB_iUT}{\sqrt{\beta_{\kappa_i}\mathcal{R}}}\sqrt{\ln{\frac{KT}{\delta}}}\right).
        \end{align*}
    \end{theorem}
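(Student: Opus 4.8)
The plan is to decompose the regret of RF-IOKS against any $f\in\mathbb{H}_i$ into the regret of IOKS run inside the finite-dimensional space $\mathcal{F}_i$ plus the error from approximating $f$ by random features. For a comparator $f\in\mathbb{H}_i$ with $f(\mathbf{x})=\int_{\Omega}\alpha(\omega)\phi_{\kappa_i}(\mathbf{x},\omega)p_{\kappa_i}(\omega)\mathrm{d}\,\omega$ and $\vert\alpha(\omega)\vert\leq U$, I would take its random-feature surrogate $\hat{f}(\mathbf{x})=\frac{1}{D_i}\sum^{D_i}_{j=1}\alpha(\omega_j)\phi_{\kappa_i}(\mathbf{x},\omega_j)$, which lies in $\mathcal{F}_i$ because its coefficients satisfy $\vert\alpha_j\vert\leq U/D_i$. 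Then I split
\[
\mathbb{E}\left[\bar{L}_{\mathbf{p}_{1:T}}\right]-L_T(f)
=\underbrace{\left(\mathbb{E}\left[\bar{L}_{\mathbf{p}_{1:T}}\right]-L_T(\hat{f})\right)}_{\mathrm{(I)}}
+\underbrace{\left(L_T(\hat{f})-L_T(f)\right)}_{\mathrm{(II)}}.
\]

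For term $\mathrm{(I)}$, RF-IOKS is precisely IOKS executed in $\{\mathcal{F}_i\}^K_{i=1}$: the RKHS update \eqref{eq:IJCAI2022:IOKS:type_2_updating} is replaced by the $\ell_2$-projected gradient step \eqref{eq:IJCAI2022:IOKS:type_2_updating_RF} onto $\mathcal{W}_i$, which shares the same norm budget $U$ and the same gradient-norm bound $G_1$ (Assumption \ref{ass:AAAI2022:the_second_lipschitz_condition}). Hence Theorem \ref{coro:IJCAI2022:IOKS:regret_bound} applies with $\hat{f}\in\mathcal{F}_i$ as the comparator and supplies the first two terms of the claimed bound. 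The one thing to verify is that the IOKS guarantee is dimension-free: the outer Tsallis-entropy layer updating $\mathbf{q}_t$ is independent of the geometry of the hypothesis spaces, while the inner online gradient descent bound depends only on $U$ and the gradient norm, both of which transfer unchanged from $\mathcal{H}_i$ to $\mathcal{W}_i$.

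For term $\mathrm{(II)}$, I would bound the feature-approximation error uniformly over the observed instances. Since $\mathbb{E}[\hat{f}(\mathbf{x}_t)]=f(\mathbf{x}_t)$ and each summand $\alpha(\omega_j)\phi_{\kappa_i}(\mathbf{x}_t,\omega_j)$ lies in $[-UB_i,UB_i]$, Hoeffding's inequality gives $\vert\hat{f}(\mathbf{x}_t)-f(\mathbf{x}_t)\vert\leq UB_i\sqrt{2\ln(2/\delta')/D_i}$ with probability $1-\delta'$ for each fixed $t$; a union bound over the $T$ rounds and $K$ kernels with $\delta'=\delta/(KT)$ makes this hold simultaneously with probability $1-\delta$. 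Invoking the scalar Lipschitz bound $\vert\ell'(f(\mathbf{x}_t),y_t)\vert\leq G$ from Assumption \ref{ass:AAAI2022:lipschitz_condition},
\[
L_T(\hat{f})-L_T(f)\leq G\sum^T_{t=1}\vert\hat{f}(\mathbf{x}_t)-f(\mathbf{x}_t)\vert
\leq GUB_iT\sqrt{\frac{2\ln(2KT/\delta)}{D_i}},
\]
and substituting $D_i=\beta_{\kappa_i}\mathcal{R}$ from Assumption \ref{assump:AAAI2021:TBOKS:time_bounded} yields exactly the third term.

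Finally I would add the two bounds. Term $\mathrm{(II)}$ is deterministic once the random features are drawn and does not involve the sampling randomness $\{I_t\}$, whereas term $\mathrm{(I)}$ is an expectation over $\{I_t\}$; so after conditioning on the high-probability event for the features, the expectation acts only on $\mathrm{(I)}$ and the two estimates add cleanly, producing the stated bound with probability at least $1-\delta$. I do not expect any single step to be a serious obstacle — the concentration is textbook Hoeffding and the optimization regret is already established. The delicate points are the two places where the assumptions must be matched: confirming that Theorem \ref{coro:IJCAI2022:IOKS:regret_bound} really is dimension-free so it survives the passage from $\mathcal{H}_i$ to $\mathcal{F}_i$, and keeping the two Lipschitz constants separate, since $G_1$ drives the optimization regret in $\mathrm{(I)}$ while $G$ converts the pointwise approximation error into a loss gap in $\mathrm{(II)}$.
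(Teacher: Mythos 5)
Your proposal is correct and follows essentially the same route as the paper's proof: the identical decomposition into the IOKS regret against the surrogate $\hat{f}\in\mathcal{F}_i$ (handled by Theorem \ref{coro:IJCAI2022:IOKS:regret_bound}, which transfers because the projected update \eqref{eq:IJCAI2022:IOKS:type_2_updating_RF} preserves the norm budget $U$ and gradient bound $G_1$) plus the approximation gap $L_T(\hat{f})-L_T(f)$, which the paper likewise bounds via its Hoeffding-based Lemma \ref{lemma:IJCAI2022:approximate_error}, the scalar Lipschitz constant $G$ from Assumption \ref{ass:AAAI2022:lipschitz_condition}, and a union bound over the $T$ rounds and $K$ kernels with $\delta'=\delta/(KT)$. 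Your explicit separation of the roles of $G_1$ and $G$, and the remark that the feature-approximation term is deterministic given the drawn features, match the paper's (more tersely stated) argument.
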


    The regret bound depends on $\frac{T}{\sqrt{\mathcal{R}}}$
    which also proves a trade-off between regret bound and time constraint.
    Achieving a $\tilde{O}(T^{\alpha})$ bound requires
    $\mathcal{R}=\Omega(T^{2(1-\alpha)})$, $\alpha\in[\frac{1}{2},1)$.
    The regret bounds in
    Theorem \ref{thm:ECML2022:OKS:loss_bound:RF_OKS++} depend on $L_T(f^\ast_i)$,
    while the regret bound in Theorem \ref{thm:IJCAI2022:RF-IOKS:regret_bound} depends on $T$.
    Under a same time budget $\mathcal{R}$,
    if $L_T(f^\ast_i)\ll T$,
    then RF-OKS++ enjoys better regret bounds than RF-IOKS.

\subsection{Comparison With Previous Results}

    For online kernel selection with time constraint,
    if the loss function is Lipschitz continuous,
    then there is a $\Omega(\Vert f^\ast_i\Vert_{\mathcal{H}_i}
    \max\{\sqrt{T},\frac{T}{\sqrt{\mathcal{R}}}\})$
    lower bound on expected regret \cite{Li2022Worst}.
    Theorem \ref{thm:IJCAI2022:RF-IOKS:regret_bound} gives a nearly optimal upper bound.
    LKMBooks \cite{Li2022Worst}
    gives a $O(\sqrt{T\ln{K}}+\Vert f\Vert^{2}_{\mathcal{H}_i}
    \max\{\sqrt{T},\frac{T}{\sqrt{\mathcal{R}}}\})$ bound in the case of $K\leq d$,
    and thus is slightly better than RF-IOKS.
    LKMBooks selects $K$ hypotheses per-round.
    RF-IOKS just selects a hypothesis per-round and is suitable for $K>d$.

    For smooth loss functions,
    the dominated terms in Theorem \ref{thm:ECML2022:OKS:loss_bound:RF_OKS++}
    are $O(\frac{L_T(f^\ast_i)}{\sqrt{\mathcal{R}}})$ and
    $O(\frac{1}{\sqrt{\mathcal{R}}}\sqrt{TL_T(f^\ast_i)}+\frac{T}{\mathcal{R}})$.
    If the optimal kernel $\kappa_{i^\ast}$ matches well with the data,
    that is, $L_T(f^\ast_{i^\ast})\ll T$,
    then $O(\frac{L_T(f^\ast_{i^\ast})}{\sqrt{\mathcal{R}}})$
    and $O(\frac{1}{\sqrt{\mathcal{R}}}\sqrt{TL_T(f^\ast_{i^\ast})})$
    are much smaller than $O(\frac{T}{\sqrt{\mathcal{R}}})$.
    To be specific,
    in the case of $L_T(f^\ast_{i^\ast})=o(T)$,
    RF-OKS++ is better than LKMBooks within a same time budget $\mathcal{R}$.

    Our algorithms are similar with Raker \cite{Shen2019Random}
    which also adopts random features.
    Raker selects $K$ hypotheses
    and provides a $\tilde{O}((\sqrt{\ln{K}}+\Vert f\Vert^{2}_1)\sqrt{T}
        +\Vert f\Vert_1\frac{T}{\sqrt{\mathcal{R}}})$ bound,
    where $f=\sum^T_{t=1}\alpha_t\kappa_i(\mathbf{x}_t,\cdot)$
    and $\Vert f\Vert_1=\Vert {\bm \alpha}\Vert_1$.
    The regret bounds of RF-OKS++ are better, since
    (i) they depend on $L_T(f^\ast_i)$ and $\sum^K_{j=1}L_T(f^\ast_j)$
    while the regret bound of Raker depends on $T$;
    (ii) they depend on $U$,
    while the regret bound of Raker depends on $\Vert f\Vert_1$
    which is hard to bound and explain.

\section{Experiments}

    We adopt the Gaussian kernel $\kappa(\mathbf{x},\mathbf{v})
    =\exp(-\frac{\Vert\mathbf{x}-\mathbf{v}\Vert^2_2}{2\sigma^2})$
    and select $6$ kernel widths $\sigma=2^{-2:1:3}$.
    We choose four classification datasets
    (\textit{magic04:19,020, phishing:11,055, a9a:32,561, SUSY:20,000})
    and four regression datasets
    (\textit{bank:8,192, elevators:16,599, ailerons:13,750, Hardware:28,179}).
    The datasets are downloaded from UCI
    \footnote{\url{http://archive.ics.uci.edu/ml/datasets.php}},
    LIBSVM website
    \footnote{\url{https://www.csie.ntu.edu.tw/~cjlin/libsvmtools/datasets/}}
    and WEKA.
    The features of all datasets are rescaled to fit in $[-1,1]$.
    The target variables are rescaled in $[0,1]$ for regression
    and $\{-1,1\}$ for classification.
    We randomly permutate the instances
    in the datasets 10 times and report the average results.
    All algorithms are implemented with R on a Windows machine
    with 2.8 GHz Core(TM) i7-1165G7 CPU
   \footnote{The codes are available at \url{https://github.com/JunfLi-TJU/OKS-Bandit}}.
    We separately consider online kernel selection without and with time constraint.

\subsection{Online kernel selection without time constraint}

    We compare OKS++, IOKS with OKS and aim to verify
    Theorem \ref{thm:ECML2022:OKS:loss_bound} and Theorem \ref{coro:IJCAI2022:IOKS:regret_bound}.
    We consider three loss functions:
    (i) the logistic loss satisfying
    Assumption \ref{ass:AAAI2022:smoothness} with $\nu=1$ and $C_0=1$;
    (ii) the square loss satisfying
    Assumption \ref{ass:AAAI2022:smoothness} with $\nu=2$ and $C_0=4$;
    (iii) the absolute loss which is Lipschitz continuous.
    We do not compare with $\mathrm{B(AO)_2KS}$ \cite{Liao2021High},
    since it is only used for the hinge loss.
    If $\ell$ is logistic loss,
    then we use classification datasets and measure the average mistake rate,
    i.e., $\mathrm{AMR}:=\frac{1}{T}\sum^T_{t=1}\mathbb{I}_{\hat{y}_t\neq y_t}$,
    and set $U=15$.
    Otherwise,
    we use regression datasets and measure the average loss,
    i.e., $\mathrm{AL}:=\frac{1}{T}\sum^T_{t=1}\ell(f_{t,I_t}(\mathbf{x}_t),y_t)$,
    and set $U=1$.
    The parameters of OKS++ and IOKS follow
    Theorem \ref{thm:ECML2022:OKS:loss_bound} and Theorem \ref{coro:IJCAI2022:IOKS:regret_bound}
    where we change $\eta=\frac{8\ell_{\max}K^{{3}/{8}}}{UG_1\sqrt{T\ln{T}}}$
    in Theorem \ref{coro:IJCAI2022:IOKS:regret_bound} and set $\ell_{\max}=1$.
    For OKS, we set $\delta,\lambda$ and $\eta$
    according to Remark \ref{remark:ECML2022:parameter_setting_OKS},
    where $\lambda\in\{1,5,10,25\}\cdot\sqrt{\delta/(KT)}$ and $\ell_{\max}=G=1$.
    The results are shown in Table \ref{tab:ECML2022:without_constraint:logistic_loss},
    Table \ref{tab:ECML2022:without_constraint:square_loss}
    and Table \ref{tab:ECML2022:without_constraint:absolute_loss}.

    \begin{table}[!t]
      \centering
      \setlength{\tabcolsep}{2mm}
      \caption{Online kernel selection without time constraint in the regime of logistic loss}
      \label{tab:ECML2022:without_constraint:logistic_loss}
      \begin{tabular}{l|rr|rr}
        \Xhline{0.8pt}
        \multirow{2}{*}{Algorithm}&\multicolumn{2}{c|}{phishing}&\multicolumn{2}{c}{a9a}  \\
        \cline{2-5}&AMR (\%)&Time (s)&AMR (\%)&Time (s) \\
        \hline
        OKS   & 13.80 $\pm$ 0.34 & 17.34 $\pm$ 1.48 & 19.65 $\pm$ 0.12 & 208.84 $\pm$ 31.16\\
        IOKS  & 13.25 $\pm$ 0.28 &  6.58 $\pm$ 0.18 & 17.46 $\pm$ 0.12 & 103.91 $\pm$ 13.89\\
        OKS++ & \textbf{7.80} $\pm$ \textbf{0.49}   & 32.31 $\pm$ 3.98
              & \textbf{16.57} $\pm$ \textbf{0.31}  & 474.65 $\pm$ 117.43\\
        \Xhline{0.8pt}
        \multirow{2}{*}{Algorithm}&\multicolumn{2}{c|}{magic04}&\multicolumn{2}{c}{SUSY}  \\
        \cline{2-5}&AMR (\%)&Time (s)&AMR (\%)&Time (s) \\
        \hline
        OKS   & 22.23 $\pm$ 0.22 & 6.31 $\pm$ 0.95 & 32.98 $\pm$ 0.66 & 9.97 $\pm$ 1.85\\
        IOKS  & 21.50 $\pm$ 0.18 & 4.02 $\pm$ 0.11 & 31.75 $\pm$ 0.30 & 6.68  $\pm$ 0.15\\
        OKS++ & \textbf{17.88} $\pm$ \textbf{0.57} & 11.06 $\pm$ 3.08
              & \textbf{27.84} $\pm$ \textbf{0.70} & 19.88 $\pm$ 5.28\\
        \Xhline{0.8pt}
      \end{tabular}
    \end{table}

    \begin{table}[!t]
      \centering
      \setlength{\tabcolsep}{1.7mm}
      \caption{Online kernel selection without time constraint in the regime of square loss}
      \label{tab:ECML2022:without_constraint:square_loss}
      \begin{tabular}{l|rr|rr}
        \Xhline{0.8pt}
        \multirow{2}{*}{Algorithm}&\multicolumn{2}{c|}{elevators}&\multicolumn{2}{c}{bank}  \\
        \cline{2-5}&AL &Time (s)&AL &Time (s) \\
        \hline
        OKS   & 0.0068 $\pm$ 0.0001 & 3.23 $\pm$ 0.25  & 0.0240 $\pm$ 0.0002 & 1.51 $\pm$ 0.17\\
        IOKS  & 0.0077 $\pm$ 0.0001 & 4.08 $\pm$ 0.05  & 0.0252 $\pm$ 0.0002 & 1.57 $\pm$ 0.11\\
        OKS++ & \textbf{0.0046} $\pm$ \textbf{0.0001}  & 12.75 $\pm$ 3.12
              & \textbf{0.0205} $\pm$ \textbf{0.0006}  & 4.24 $\pm$ 0.76\\
        \Xhline{0.8pt}
        \multirow{2}{*}{Algorithm}&\multicolumn{2}{c|}{ailerons}&\multicolumn{2}{c}{Hardware}  \\
        \cline{2-5}&AL &Time (s)&AL &Time (s) \\
        \hline
        OKS   & \textbf{0.0176} $\pm$ \textbf{0.0060} & 6.94 $\pm$ 0.82  & 0.0012 $\pm$ 0.0000 & 53.84 $\pm$ 1.80\\
        IOKS  & 0.0351 $\pm$ 0.0003 & 5.59 $\pm$ 0.08  & 0.0010 $\pm$ 0.0001 & 49.36 $\pm$ 1.14\\
        OKS++ & \textbf{0.0166} $\pm$ \textbf{0.0006}  & 22.79 $\pm$ 3.41
              & \textbf{0.0008} $\pm$ \textbf{0.0001}  & 114.47 $\pm$ 23.42\\
        \Xhline{0.8pt}
      \end{tabular}
    \end{table}

    \begin{table}[!t]
      \centering
      \setlength{\tabcolsep}{1.7mm}
      \caption{Online kernel selection without time constraint in the regime of absolute loss}
      \label{tab:ECML2022:without_constraint:absolute_loss}
      \begin{tabular}{l|rr|rr}
        \Xhline{0.8pt}
        \multirow{2}{*}{Algorithm}&\multicolumn{2}{c|}{elevators}&\multicolumn{2}{c}{bank}  \\
        \cline{2-5}&AL&Time (s)&AL&Time (s) \\
        \hline
        OKS   & 0.0507 $\pm$ 0.0001 & 4.76 $\pm$ 0.17 & 0.0961 $\pm$ 0.0009 & 1.55 $\pm$ 0.13\\
        IOKS  & \textbf{0.0492} $\pm$ \textbf{0.0004} & 5.20 $\pm$ 0.54 & 0.0961 $\pm$ 0.0008 & 1.64 $\pm$ 0.20\\
        \Xhline{0.8pt}
        \multirow{2}{*}{Algorithm}&\multicolumn{2}{c|}{ailerons}&\multicolumn{2}{c}{Hardware}  \\
        \cline{2-5}&AL&Time (s)&AL&Time (s) \\
        \hline
        OKS   & \textbf{0.0723} $\pm$ \textbf{0.0005} & 8.20 $\pm$ 0.19
              & \textbf{0.0105} $\pm$ \textbf{0.0001} & 56.14 $\pm$ 1.07\\
        IOKS  &0.0771 $\pm$ 0.0007 & 9.86 $\pm$ 0.68  &0.0155 $\pm$ 0.0002 & 52.01 $\pm$ 3.72\\
        \Xhline{0.8pt}
      \end{tabular}
    \end{table}

    Table \ref{tab:ECML2022:without_constraint:logistic_loss}
    and
    Table \ref{tab:ECML2022:without_constraint:square_loss} prove that
    OKS++ performs better than OKS and IOKS for smooth loss functions.
    The reason is that OKS++ adaptively tunes the parameters using the observed losses,
    while OKS and IOKS do not use this information to tune the parameters.
    The experimental results coincide with Theorem \ref{thm:ECML2022:OKS:loss_bound}.
    Besides IOKS performs similar with OKS,
    since IOKS is only asymptotically better than OKS.
    If $T$ is small, then the regret bound of OKS is smaller.
    The theoretical significance of IOKS is that
    it proves that selecting a hypothesis
    does not produce high information-theoretic cost in the worst case.

\subsection{Online kernel selection with time constraint}

    We compare RF-OKS++, RF-IOKS with Raker \cite{Shen2019Random}, LKMBooks \cite{Li2022Worst}
    and RF-OKS \cite{Yang2012Online}.
    We construct RF-OKS by combining random features with OKS.
    The parameter setting of Raker and LKMBooks follows original paper,
    except that the learning rate of Raker is chosen from
    $\{1,5,10,25\}\cdot {1}/{\sqrt{T}}$.
    The parameter setting of RF-OKS++, RF-IOKS and RF-OKS
    is same with that of OKS++, IOKS and OKS, respectively.
    We limit time budget $\mathcal{R}$ by fixing the number of random features.
    To be specific,
    we choose RF-OKS++ as the baseline
    and set $D_i=D=400$ for all $i\in[K]$ satisfying the condition in
    Theorem \ref{thm:ECML2022:OKS:loss_bound:RF_OKS++}.
    Let the average per-round running time of RF-OKS++ be $t_{\mathrm{p}}$.
    We tune $D$ or $B$ of other algorithms
    for ensuring the same running time with $t_{\mathrm{p}}$.
    The results are shown in Table \ref{tab:ECML2022:with_constraint:logistic_loss},
    Table \ref{tab:ECML2022:with_constraint:square_loss}
    and Table \ref{tab:ECML2022:with_constraint:absolute_loss}.
    In Tbale \ref{tab:ECML2022:with_constraint:absolute_loss},
    we use RF-IOKS as the baseline.

    \begin{table}[!t]
      \centering
      \setlength{\tabcolsep}{1.7mm}
      \caption{Online kernel selection with time constraint in the regime of logistic loss}
      \label{tab:ECML2022:with_constraint:logistic_loss}
      \begin{tabular}{l|c|rr|c|rr}
        \Xhline{0.8pt}
        \multirow{2}{*}{Algorithm}&\multirow{2}{*}{B\text{-}D}&\multicolumn{2}{c|}{phishing}
        &\multirow{2}{*}{B\text{-}D}&\multicolumn{2}{c}{a9a}  \\
        \cline{3-4}\cline{6-7}&&AMR (\%)&$t_p*10^5(s)$&&AMR (\%)&$t_p*10^5(s)$  \\
        \hline
        RF-OKS     & 500 & 14.61 $\pm$ 0.65 & 9.63  &450 & 21.25 $\pm$ 0.12 & 11.61\\
        LKMBooks   & 250 & 12.50 $\pm$ 1.03 & 9.46 &220 & 20.06 $\pm$ 0.54 & 11.53\\
        Raker      & 70  & 13.60 $\pm$ 1.00 & 9.35  & 90 & 24.08 $\pm$ 0.00 & 11.30\\
        \hline
        RF-IOKS    & 380 & 15.59 $\pm$ 0.39 & 9.66  &380 & 22.99 $\pm$ 0.20 & 11.95\\
        RF-OKS++   & 400 & \textbf{9.15} $\pm$ \textbf{0.56} & 9.20
                   & 400 & \textbf{17.28} $\pm$ \textbf{0.29} & 11.19 \\
        \Xhline{0.8pt}
      \end{tabular}
    \end{table}

    \begin{table}[!t]
      \centering
      \setlength{\tabcolsep}{1.7mm}
      \caption{Online kernel selection with time constraint in the regime of square loss}
      \label{tab:ECML2022:with_constraint:square_loss}
      \begin{tabular}{l|c|rr|c|rr}
        \Xhline{0.8pt}
        \multirow{2}{*}{Algorithm}&\multirow{2}{*}{B\text{-}D}&\multicolumn{2}{c|}{elevators}
        &\multirow{2}{*}{B\text{-}D}&\multicolumn{2}{c}{Hardware}  \\
        \cline{3-4}\cline{6-7}&&$\mathrm{AL}*10^2$&$t_p*10^5(s)$&&$\mathrm{AL}*10^2$&$t_p*10^5(s)$ \\
        \hline
        RF-OKS     & 450 & 0.72 $\pm$ 0.02 & 6.47 & 420 & 0.13 $\pm$ 0.00 & 10.48\\
        LKMBooks   & 220 & 0.90 $\pm$ 0.04 & 6.72 & 200 & 0.21 $\pm$ 0.01 & 10.76\\
        Raker      & 40  & 0.70 $\pm$ 0.04 & 6.57 & 80  & 0.20 $\pm$ 0.00 & 10.25\\
        \hline
        RF-IOKS    & 380 & 0.89 $\pm$ 0.01 & 6.83  & 400 & 0.12 $\pm$ 0.01 & 10.20\\
        RF-OKS++   & 400 & \textbf{0.51} $\pm$ \textbf{0.02} & 6.45
                   & 400 & \textbf{0.09} $\pm$ \textbf{0.01} & 10.31\\
        \Xhline{0.8pt}
      \end{tabular}
    \end{table}

    \begin{table}[!t]
      \centering
      \setlength{\tabcolsep}{1.5mm}
      \caption{Online kernel selection with time constraint in the regime of absolute loss}
      \label{tab:ECML2022:with_constraint:absolute_loss}
      \begin{tabular}{l|c|rr|c|rr}
        \Xhline{0.8pt}
        \multirow{2}{*}{Algorithm}&\multirow{2}{*}{B\text{-}D}&\multicolumn{2}{c|}{elevators}
        &\multirow{2}{*}{B\text{-}D}&\multicolumn{2}{c}{Hardware}  \\
        \cline{3-4}\cline{6-7}&&AL&$t_p*10^5$&&AL&$t_p*10^5$ \\
        \hline
        RF-OKS     &530 & \textbf{0.0515} $\pm$ \textbf{0.0004} &  7.13
                   &400 & \textbf{0.0108} $\pm$ \textbf{0.0001} & 10.39\\
        LKMBooks   &230 & 0.0550 $\pm$ 0.0014 &  7.35 &200 & 0.0203 $\pm$ 0.0020 & 10.41\\
        Raker      &50  & 0.0550 $\pm$ 0.0012 &  7.41 &80  & 0.0154 $\pm$ 0.0001 & 10.37\\
        \hline
        RF-IOKS    &400 & \textbf{0.0515} $\pm$ \textbf{0.0007} &  7.63 &400 & 0.0164 $\pm$ 0.0002 & 10.97\\
        \Xhline{0.8pt}
      \end{tabular}
    \end{table}

    For smooth loss functions,
    RF-OKS++ still performs best under a same time budget.
    The reason is also that
    RF-OKS++ adaptively tunes the parameters using the observed losses,
    while the other algorithms do not use the observed losses.
    For the square loss function,
    Theorem \ref{thm:ECML2022:OKS:loss_bound:RF_OKS++}
    shows the regret bound depends on $O(\frac{1}{\sqrt{\mathcal{R}}}\sqrt{TL_T(f^\ast_i)})$
    which becomes $O(\frac{T}{\sqrt{\mathcal{R}}})$ in the worst case
    and thus is same with previous results.
    To explain the contradiction,
    we recorded the cumulative square losses of RF-OKS++,
    i.e., $\sum^T_{t=1}(f_{t,I_t}(\mathbf{x}_t)-y_t)^2$
    and use it as a proxy for $L_T(f^\ast_i)$.
    In our experiments,
    $L_T(f^\ast_i)\approx88.6$ on the \textit{elevators} dataset and
    $L_T(f^\ast_i)\approx23.8$ on the \textit{Hardware} dataset.
    Thus $L_T(f^\ast_i)\ll T$
    and $O(\frac{1}{\sqrt{\mathcal{R}}}\sqrt{TL_T(f^\ast_i)})$
    is actually smaller than $O(\frac{T}{\sqrt{\mathcal{R}}})$.
    The above results coincide with Theorem \ref{thm:ECML2022:OKS:loss_bound:RF_OKS++}.

    RF-IOKS shows similar performance with the baseline algorithms,
    which is consistent with Theorem \ref{thm:IJCAI2022:RF-IOKS:regret_bound}.
    The regret bound of RF-IOKS is slightly worse than that of LKMBooks and Raker,
    and is only asymptotically better than RF-OKS.
    All of the baseline algorithms tune the stepsize in hindsight,
    which is impossible in practice since the data can only be predicted once.
    RF-IOKS also proves that selecting a hypothesis
    does not damage the regret bound much in the worst case.
    More experiments are shown in the supplementary materials.

\section{Conclusion}

    In this paper,
    we have proposed two algorithms for online kernel selection under bandit feedback
    and improved the previous worst-case regret bound.
    OKS++ which is applied for smooth loss functions,
    adaptively tunes parameters of OKS and achieves data-dependent regret bounds
    depending on the minimal cumulative losses.
    IOKS which is applied for Lipschitz loss functions,
    achieves a worst-case regret bound asymptotically better than previous result.
    We further apply the two algorithms to online kernel selection with time constraint
    and obtain better or similar regret bounds.

    From the perspective of algorithm design,
    there is a trade-off between regret bound and the amount of observed information.
    IOKS proves that
    selecting a hypothesis or multiple hypotheses per-round
    will not induce significant variation on the worst-case regret bound.
    OKS++ which performs well both in theory and practice,
    implies that there may be differences in terms of data-dependent regret bounds.
    This question is left to future work.

%\subsubsection{Acknowledgements} Please place your acknowledgments at
%the end of the paper, preceded by an unnumbered run-in heading (i.e.
%3rd-level heading).

%
% ---- Bibliography ----
%
% BibTeX users should specify bibliography style 'splncs04'.
% References will then be sorted and formatted in the correct style.
%
% \bibliographystyle{splncs04}
% \bibliography{mybibliography}
%

\end{document}